\newtheorem{definition}{Definition}
\newcommand{\E}{\mathbb{E}}
\newcommand{\R}{\mathbb{R}}
\DeclareMathOperator*{\argmax}{arg\,max}
\icmltitlerunning{Probability Functional Descent}
\begin{document}

\twocolumn[
\icmltitle{Probability Functional Descent: A Unifying Perspective\\ on GANs, Variational Inference, and Reinforcement Learning}




\begin{icmlauthorlist}
\icmlauthor{Casey Chu}{stanfordicme}
\icmlauthor{Jose Blanchet}{stanfordmse} 
\icmlauthor{Peter Glynn}{stanfordmse}
\end{icmlauthorlist}

\icmlaffiliation{stanfordicme}{Institute for Computational \& Mathematical Engineering, Stanford University, Stanford, California, USA}
\icmlaffiliation{stanfordmse}{Management Science \& Engineering, Stanford University, Stanford, California, USA}

\icmlcorrespondingauthor{Casey Chu}{caseychu@stanford.edu}

\icmlkeywords{Machine Learning, ICML}

\vskip 0.3in
]



\printAffiliationsAndNotice{}  

\begin{abstract}
The goal of this paper is to provide a unifying view of a wide range of problems of interest in machine learning by framing them as the minimization of functionals defined on the space of probability measures. In particular, we show that generative adversarial networks, variational inference, and actor-critic methods in reinforcement learning can all be seen through the lens of our framework. We then discuss a generic optimization algorithm for our formulation, called \emph{probability functional descent} (PFD), and show how this algorithm recovers existing methods developed independently in the settings mentioned earlier.
\end{abstract}

\section{Introduction}

\begin{table*}
    \centering
    \begin{tabular}{lccc} \toprule
         Domain & Distribution of interest & Functional & Functional derivative \\\midrule
        Generative adversarial networks & Generator $\mu$ & $D(\mu || \nu)$ & Discriminator $D^*(x)$ \\
        Variational inference & Approximate posterior $q(z)$ & $D_{\text{KL}}(q(z) || p(z|x))$ & Negative ELBO $\log \frac{q(z)}{p(x,z)}$ \\
         Reinforcement learning & Policy $\pi(a|s)$ & Expected reward & Advantage $Q^\pi(s,a) - V^\pi(s)$ \\\bottomrule
    \end{tabular}
    \caption{Framing a problem as the optimization of a probability functional unifies several domains.}
    \label{fig:domains}
\end{table*}

\begin{table*}
    \centering
    \begin{tabular}{lcccc} \toprule
         Algorithm & Type of derivative estimator  \\\midrule
        \textbf{Generative adversarial networks} \\
         Minimax GAN \cite{goodfellow2014generative} & Convex duality \\
         Non-saturating GAN \cite{goodfellow2014generative} & Binary classification \\
         Wasserstein GAN \cite{arjovsky2017wasserstein} & Convex duality \\\midrule
        \textbf{Variational inference} \\
         Black-box variational inference \cite{ranganath2014black} & Exact \\
         Adversarial variational Bayes \cite{mescheder2017adversarial} & Binary classification \\
         Adversarial posterior distillation \cite{wang2018adversarial} & Convex duality \\\midrule
        \textbf{Reinforcement learning} \\
        Policy iteration \cite{howard1960dynamic} & Exact \\
        Policy gradient \cite{williams1992simple} & Monte Carlo \\
        Actor-critic \cite{konda2000actor,sutton2000policy} & Least squares \\
        Dual actor-critic \cite{chen2016stochastic,dai2017boosting} & Convex duality \\ \bottomrule
    \end{tabular}
    \caption{Different existing algorithms correspond to different ways of estimating the functional derivative.}
    \label{fig:approximations}
\end{table*}

Deep learning now plays an important role in many domains, for example, in generative modeling, deep reinforcement learning, and variational inference. In the process, dozens of new algorithms have been proposed for solving these problems with deep neural networks, specific of course to domain at hand. 

In this paper, we introduce a conceptual framework which can be used to understand in a unified way a broad class of machine learning problems. Central to this framework is an abstract optimization problem in the space of probability measures, a formulation that stems from the observation that in many fields, the object of interest is a probability distribution; moreover, the learning process is guided by a \emph{probability functional} to be minimized, a loss function that conceptually maps a probability distribution to a real number. \autoref{fig:domains} lists these correspondences in the case of generative adversarial networks, variational inference, and reinforcement learning.

Because the optimization now takes place in the infinite-dimensional space of probability measures, standard finite-dimensional algorithms like gradient descent are initially unavailable; even the proper notion for the derivative of these functionals is unclear. We call upon on a body of literature known as von Mises calculus \cite{mises1947asymptotic,fernholz2012mises}, originally developed in the field of asymptotic statistics, to make these functional derivatives precise. Remarkably, we find that once the connection is made, the resulting generalized descent algorithm, which we call \emph{probability functional descent}, is intimately compatible with standard deep learning techniques such as stochastic gradient descent \cite{bottou2010large}, the reparameterization trick \cite{kingma2013auto}, and adversarial training \cite{goodfellow2014generative}. 

When we apply probability functional descent to the aforementioned domains, we find that we recover a wide range of existing algorithms, and the essential distinction between them is simply the way that the functional derivative, the \emph{von Mises influence function} in this context, is approximated. \autoref{fig:approximations} lists these algorithms and their corresponding approximation methods. Probability functional descent therefore acts as a unifying framework for the analysis of existing algorithms as well as the systematic development of new ones.

\subsection{Related work}
The problem of optimizing functionals of probability measures is not new. For example, \citet{gaivoronski1986linearization} and \citet{molchanov2001variational} study these types of problems and even propose Frank-Wolfe and steepest descent algorithms to solve these problems. However, their algorithms are not immediately practical for the high-dimensional machine learning settings described here, and it is not clear how to integrate their methods with modern deep learning techniques.

Several others in the machine learning community also adopt the perspective of descent in the space of probability distributions. In order to introduce functional gradients, these approaches endow the space of probability distributions with either Hilbert structure \cite{dai2014scalable,dai2016provable,liu2016stein,dai2018learning} or Wasserstein structure \cite{richemond2017wasserstein,frogner2018approximate,zhang2018policy,lin2018wasserstein} and rely on gradient descent or Wasserstein gradient flow respectively to decrease the objective value. Such approaches typically require kernel-based or particle-based methods to implement in practice. By contrast, our approach foregoes gradients and instead directly considers descent on linear approximations by leveraging the G\^ateaux derivative. As we shall illustrate, this approach is more compatible with standard deep learning techniques and indeed leads exactly to many existing deep learning-based algorithms. \citet{carmona2018probabilistic} provide a technical comparison between these differing approaches for defining derivatives in chapter 5.


Finally, one part of our work recasts convex optimization problems as saddle-point problems by means of convex duality as a technique for estimating functional derivatives. This correspondence between convex optimization problems and saddle point problems is an old and general concept \cite{rockafellar1968general}, and it underlies classical dual optimization techniques \cite{lucchetti2006convexity,luenberger2008linear}. Nevertheless, the use of these min-max representations remains an active topic of research in machine learning. Most notably, the literature concerning generative adversarial networks has recognized that certain min-max problems are equivalent to certain convex problems \cite{goodfellow2014generative,nowozin2016f,farnia2018convex}. Outside of GANs, \citet{dai2017learning,dai2018sbeed} have begun using these min-max representations to inspire learning algorithms. These min-max representations are an important tool for us that allows for practical implementation of our theory.

\section{Descent on a Probability Functional}
We let $\mathcal{P}(X) $ be the space of Borel probability measures on a topological space $X$. Our abstract formulation takes the form of a minimization problem over probability distributions:
\[
    \min_{\mu \in \mathcal{P}(X)} J(\mu),
\]
where $J:\mathcal{P}(X) \to \R$ is called a probability functional. In order to avoid technical digressions, we assume that $X$ is a metric space that is compact, complete, and separable (i.e.~a compact Polish space). We endow $\mathcal{P}(X)$ with the topology of weak convergence, also known as the weak* topology.

We now draw upon elements of von Mises calculus \cite{mises1947asymptotic} to make precise the notion of derivatives of functionals such as $J$. See \citet{fernholz2012mises} for an in-depth discussion, or \citet{santambrogio2015functionals} for another perspective.

\begin{definition}[(G\^ateaux differential)]
Let $J:\mathcal{P}(X) \to \R$ be a function. The G\^ateaux differential $dJ_\mu$ at $\mu \in \mathcal{P}(X)$ in the direction $\chi$ is defined by \begin{equation}
    dJ_\mu(\chi) = \lim_{\epsilon \to 0^+} \frac{J(\mu + \epsilon \chi) - J(\mu)}{\epsilon}, \label{def:gateaux}
\end{equation} where $\chi = \nu - \mu$ for some $\nu \in \mathcal{P}(X)$.
\end{definition}

Intuitively, the G\^ateaux differential is a generalization of the directional derivative, so that $dJ_\mu(\chi)$ describes the change in the value of $J(\mu)$ when the probability measure $\mu$ is infinitesimally perturbed in the direction of $\chi$, towards another measure $\nu$. Though powerful, the G\^ateaux differential is a function of differences of probability measures, which can make it unwieldy to work with. In many cases, however, the G\^ateaux differential $dJ_\mu(\chi)$ can be concisely represented as an integral of an \emph{influence function} $\Psi_\mu : X \to \R$, where the integral is taken with respect to the measure $\chi$.
\begin{definition}[(Influence function)]
We say that $\Psi_\mu : X \to \R$ is an influence function for $J$ at $\mu \in \mathcal{P}(X)$ if the G\^ateaux differential $dJ_\mu(\chi)$ has the integral representation \begin{equation}
    dJ_\mu(\chi) = \int_X \Psi_{\mu}(x) \,\chi(dx) \label{def:influence}
\end{equation} for all $\chi = \nu - \mu$, where $\nu \in \mathcal{P}(X)$.
\end{definition}

\begin{toappendix}
\begin{lemma} \label{thm:influence} Let $J : \mathcal{P}(X) \to \R$. Then $\Psi : X \to \R$ is an influence function of $J$ at $\mu$ if and only if \[
    \frac{d}{d\epsilon}J(\mu +\epsilon \chi ) \Big|_{\epsilon=0^+} = \int_X \Psi(x)\,\chi(dx).
\]
\end{lemma}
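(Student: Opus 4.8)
The plan is to recognize that the derivative displayed in the statement is nothing more than the G\^ateaux differential $dJ_\mu(\chi)$ rewritten in Leibniz notation, after which both directions of the equivalence collapse to the definition of the influence function. So the proof is essentially a bookkeeping exercise in unwinding definitions, and I would organize it into a well-definedness check, an identification of the two notions of derivative, and the two-way implication.

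First I would fix $\chi = \nu - \mu$ with $\nu \in \mathcal{P}(X)$ and observe that $\mu + \epsilon\chi = (1-\epsilon)\mu + \epsilon\nu$ is a convex combination of two probability measures, hence itself lies in $\mathcal{P}(X)$ for every $\epsilon \in [0,1]$. This guarantees that the scalar function $g(\epsilon) := J(\mu + \epsilon\chi)$ is well-defined on a right-neighborhood of $0$, so that $J$ is only ever evaluated at legitimate probability measures and the one-sided derivative at $\epsilon = 0^+$ makes sense.

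Next I would identify that one-sided derivative with the G\^ateaux differential. By the definition of the right-hand derivative,
\[
    \frac{d}{d\epsilon} J(\mu + \epsilon\chi)\Big|_{\epsilon = 0^+} = \lim_{\epsilon \to 0^+} \frac{g(\epsilon) - g(0)}{\epsilon} = \lim_{\epsilon \to 0^+} \frac{J(\mu + \epsilon\chi) - J(\mu)}{\epsilon},
\]
and the right-hand side is exactly $dJ_\mu(\chi)$ by \eqref{def:gateaux}. Consequently the equation in the statement is literally the assertion $dJ_\mu(\chi) = \int_X \Psi(x)\,\chi(dx)$, understood to hold for every admissible direction $\chi = \nu - \mu$.

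With this identification in hand both directions are immediate. For the forward implication, if $\Psi$ is an influence function then \eqref{def:influence} gives $dJ_\mu(\chi) = \int_X \Psi\,d\chi$ for every such $\chi$, which is the displayed equation by the identification above. For the converse, the displayed equation holding for all $\chi = \nu - \mu$ says precisely that $dJ_\mu(\chi) = \int_X \Psi\,d\chi$, i.e.\ that $\Psi$ satisfies \eqref{def:influence}, so $\Psi$ is an influence function. The only place requiring care---and it is the closest thing to an obstacle---is the second step: matching the Leibniz-notation derivative to the limit appearing in the G\^ateaux definition and confirming that the perturbed measures stay in $\mathcal{P}(X)$. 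There is no genuine analytic difficulty, since no interchange of limit and integral and no regularity of $J$ beyond the existence of the stated derivative is needed.
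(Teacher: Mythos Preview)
Your proposal is correct and follows exactly the paper's approach: the paper's own proof is the one-liner ``The left-hand side equals \eqref{def:gateaux}, which equals \eqref{def:influence},'' and your argument is simply a more detailed unpacking of that same identification, with the added (but harmless) check that $\mu+\epsilon\chi \in \mathcal{P}(X)$ for $\epsilon \in [0,1]$.
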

\begin{proof}
    The left-hand side equals \eqref{def:gateaux}, which equals \eqref{def:influence}.
\end{proof}
\end{toappendix}

The influence function provides a convenient representation for the G\^ateaux differential. Because $\chi = \nu - \mu$ is a difference of probability distributions, we can also write \[
    dJ_\mu(\chi) = \E_{x \sim \nu}[ \Psi_\mu(x) ] - \E_{x \sim \mu} [ \Psi_\mu(x)]
\]
by linearity. We note that if $\Psi_\mu$ is an influence function, then so is $\Psi_\mu + c$ for a constant $c$.
 
The G\^ateaux derivative and the influence function provide the proper notion of a functional derivative, which allows us to generalize first-order descent algorithms to apply to probability functionals such as $J$. In particular, they permit a linear approximation to $J(\mu)$ around $\mu_0$, which we denote $\tilde{J}(\mu)$: \begin{align*}
    \tilde{J}(\mu) 
        &= J(\mu_0) + dJ_{\mu_0}(\mu - \mu_0) \\
        &= J(\mu_0) + \E_{x \sim \mu}[ \Psi_{\mu_0}(x) ] - \E_{x \sim \mu_0} [ \Psi_{\mu_0}(x)] \\
        &= \text{constant} + \E_{x \sim \mu}[ \Psi_{\mu_0}(x) ].
\end{align*}

This expression, also known as a von Mises representation, yields additional intuition about the influence function. Concretely, note that a small pertubation to $\mu$ decreases $J(\mu)$ if it decreases $\E_{x \sim \mu}[ \Psi_{\mu_0}(x)]$. Therefore, $\Psi_{\mu_0}$ acts as a potential function defined on $X$ that dictates where samples $x \sim \mu$ should descend if the goal is to decrease $J(\mu)$. Of course, $\Psi_{\mu_0}$ only carries this interpretation around the current value of $\mu_0$.

Based on this intuition, we now present \textbf{probability functional descent}, a straightforward analogue of finite-dimensional first-order descent algorithms to probability functionals. First, a linear approximation to the functional $J$ is computed at $\mu_0$ in the form of the influence function $\Psi_{\mu_0}$, and then a local step is taken from $\mu_0$ so as to decrease the value of the linear approximation. Concretely:

\begin{algorithm}
\caption{Probability functional descent on $J(\mu)$}
\begin{algorithmic}
    \STATE Initialize $\mu$ to a distribution in $\mathcal{P}(X)$
    \WHILE{$\mu$ has not converged}
        \STATE Set $\hat\Psi \approx \Psi_\mu$ (differentiation step)
        \STATE Update $\mu$ to decrease $\E_{x \sim \mu} [\hat\Psi(x)]$ (descent step)
    \ENDWHILE
\end{algorithmic}
\end{algorithm}

We shall see that probability functional descent serves as a blueprint for many existing algorithms: in generative adversarial networks, the differentiation and descent steps correspond to the discriminator and generator updates respectively; in reinforcement learning, they correspond to policy evaluation and policy improvement.

In its abstract form, probability functional descent requires two design choices in order to convert it into a practical algorithm. In \autoref{sec:descent}, we discuss different ways to choose the update in the descent step; \autoref{thm:chain} provides one generic way. In \autoref{sec:derivative}, we discuss different ways to approximate the influence function in the differentiation step; \autoref{thm:fenchel} provides one generic way and an unexpected connection to adversarial training. 

\section{Applying the Descent Step}
\label{sec:descent}
One straightforward way to apply the descent step of PFD is to adopt a parametrization $\theta \mapsto \mu_\theta$ and descend the stochastic gradient of $\theta \mapsto \E_{x \sim \mu_\theta}[ \hat\Psi(x)]$.\footnote{Note that this gradient step is simply one possible choice of update rule for the descent step of PFD; see \autoref{sec:policyiteration} (policy iteration) for an instance of PFD where this gradient-based update rule is not adopted.} This gradient step is justified by the following analogue of the chain rule:
\begin{theoremrep}[(Chain rule)] \label{thm:chain}
Let $J : \mathcal{P}(X) \to \R$ be continuously differentiable, in the sense that the influence function $\Psi_\mu$ exists and $(\mu, \nu) \mapsto \E_{x\sim\nu} [\Psi_\mu(x)]$ is continuous. Let the parameterization $\theta \mapsto \mu_\theta$ be differentiable, in the sense that $\frac{1}{||h||}(\mu_{\theta + h} - \mu_\theta)$ converges to a weak limit as $h \to 0$. Then \[
    \nabla_\theta J(\mu_\theta) = \nabla_\theta \E_{x \sim \mu_\theta} [\hat\Psi(x)],
\] where $\hat\Psi = \Psi_{\mu_\theta}$ is treated as a function $X \to \R$ that is not dependent on $\theta$. 
\end{theoremrep}
\begin{proof}
    Without loss of generality, assume $\theta \in \R$, as the gradient is simply a vector of one-dimensional derivatives. Let $\chi_\epsilon = \frac{1}{\epsilon}(\mu_{\theta +\epsilon} - \mu_\theta)$, and let $\chi = \lim_{\epsilon\to0}\chi_\epsilon$ (weakly). Then \begin{align*}
        \frac{d}{d\theta} J(\mu_\theta)
            &= \frac{d}{d\epsilon} J(\mu_{\theta + \epsilon}) \Big|_{\epsilon=0} \\
            &= \frac{d}{d\epsilon} J(\mu_\theta + \epsilon\chi_\epsilon) \Big|_{\epsilon=0}.
    \end{align*}
    Assuming for now that \[
         \frac{d}{d\epsilon} J(\mu_\theta + \epsilon\chi_\epsilon) \Big|_{\epsilon=0} =  
         \frac{d}{d\epsilon} J(\mu_\theta + \epsilon\chi) \Big|_{\epsilon=0},
    \] we have by \autoref{thm:influence} that \begin{align*}
        \frac{d}{d\theta} J(\mu_\theta) 
            &= \int_X \hat\Psi \,d\chi  \\
            &= \int_X \hat\Psi \,d\Big( \lim_{\epsilon\to 0}  \frac{1}{\epsilon}(\mu_{\theta +\epsilon} - \mu_\theta) \Big) \\
            &= \lim_{\epsilon\to 0} \int_X \hat\Psi \,d\Big(  \frac{1}{\epsilon}(\mu_{\theta +\epsilon} - \mu_\theta) \Big) \\
            &= \frac{d}{d\theta} \int_X \hat\Psi \,d\mu_\theta,
    \end{align*} where the interchange of limits is by the definition of weak convergence (recall we assumed that $X$ is compact, so $\hat\Psi$ is continuous and bounded by virtue of being continuous).
    
    The equality we assumed is the definition of a stronger notion of differentiability called Hadamard differentiability of $J$. Our conditions imply Hadamard differentiability via Proposition 2.33 of \citet{penot2012calculus}, noting that the map $(\mu, \chi) \mapsto \int_X \Psi_\mu\,d\chi$ is continuous by assumption.
\end{proof}

\autoref{thm:chain} converts the computation of $\nabla_{\theta} J(\mu_\theta)$, where $J$ may be a complicated nonlinear functional, into the computation of a gradient of an expectation, which is easily handled using standard methods (see e.g.~\citet{schulman2015gradient}). For example, the reparameterization trick, also known as the pathwise derivative estimator \cite{kingma2013auto,rezende2014stochastic}, uses the identity \[
    \nabla_\theta \E_{x\sim \mu_\theta}[\hat\Psi(x)] = \nabla_\theta \E_{z \sim \mathcal{N}(0,I)}[\hat\Psi(h_\theta(z))],
\] where $\mu_\theta$ samples $x = h_\theta(z)$ using $z \sim \mathcal{N}(0, I)$. Alternatively, the log derivative trick, also known as the score function gradient estimator, likelihood ratio gradient estimator, or REINFORCE \cite{glynn1990likelihood,williams1992simple,kleijnen1996optimization}, uses the identity\[
    \nabla_\theta \E_{x \sim \mu_\theta}[ \hat\Psi(x)] = \E_{x \sim \mu_\theta}[ \hat\Psi(x) \nabla_\theta \log \mu_\theta(x) ],
\] where $\mu_\theta(x)$ is the probability density function of $\mu_\theta$. This gradient-based update rule for the descent step is therefore a natural, practical choice in the context of deep learning.

\section{Approximating the Influence Function}
\label{sec:derivative}

The approximation of the influence function in the differentiation step can in principle be accomplished in many different ways. Indeed, we shall see that the distinguishing factor between many existing algorithms is exactly which influence function estimator used, as shown in \autoref{fig:approximations}. In some cases, it is possible that the influence function can be evaluated exactly, bypassing the need for approximation. Otherwise, the influence function, being a function $X \to \R$, may be modeled as a neural network; the precise way in which this neural network needs to be trained will depend on the exact analytical form of the influence function.

Remarkably, a generic approximation technique is available if the functional $J$ is convex. In this case, the influence function $\Psi_\mu$ possesses a variational characterization in terms of the convex conjugate $J^\star$ of $J$. To apply this formalism, we now view $\mathcal{P}(X)$ as a convex subset of the vector space of finite signed Borel measures $\mathcal{M}(X)$, equipped with the topology of weak convergence. Crucial to the analysis will be its dual space, $\mathcal{C}(X)$, the space of continuous functions $X \to \R$. Finally, $\overline{\R}$ denotes the extended real line $\R \cup \{ -\infty, \infty \}$. The convex conjugate is then defined as follows:
\begin{definition}
Let $J : \mathcal{M}(X) \to \overline{\R}$ be a function. Its convex conjugate is a function $J^\star : \mathcal{C}(X) \to \overline{\R}$ defined by \[
    J^\star(\varphi) = \sup_{\mu \in \mathcal{M}(X)} \Big[\int_X \varphi(x) \,\mu(dx) - J(\mu) \Big].
\]
\end{definition}

Note that $J$ must now be defined on all of $\mathcal{M}(X)$; it is always possible to simply define $J(\mu) = \infty$ if $\mu \not\in \mathcal{P}(X)$, although sometimes a different extension may be more convenient. The convex conjugate forms the core of the following representation for the influence function $\Psi_\mu$:

\begin{theoremrep}[(Fenchel--Moreau representation)] \label{thm:fenchel}
Let $J : \mathcal{M}(X) \to \overline\R$ be proper, convex, and lower semicontinuous. Then the maximizer of $\varphi \mapsto \E_{x \sim \mu} [\varphi(x)] - J^\star(\varphi)$, if it exists, is an influence function for $J$ at $\mu$. With some abuse of notation, we have that \[
    \Psi_\mu = \argmax_{\varphi \in \mathcal{C}(X)} \Big[ \E_{x \sim \mu}[\varphi(x)] - J^\star(\varphi) \Big].
\]
\end{theoremrep}
\begin{proof}
    We will exploit the Fenchel--Moreau theorem, which applies in the setting of locally convex, Hausdorff topological vector spaces (see e.g.~\citet{zalinescu2002convex}). The space we consider is $\mathcal{M}(X)$, the space of signed, finite measures equipped with the topology of weak convergence, of which $\mathcal{P}(X)$ is a convex subset. $\mathcal{M}(X)$ is indeed locally convex and Hausdorff, and its dual space is $\mathcal{C}(X)$ (see e.g.~\citet{aliprantis1999infinite}, section 5.14).

    We now show that a maximizer $\varphi^*$ is an influence function. By the Fenchel--Moreau theorem, \[
        J(\mu) = J^{\star\star}(\mu) = \sup_{\varphi \in \mathcal{C}(X)} \Big[ \int_X \varphi \,d\mu - J^\star(\varphi) \Big],
    \] and \[
        J(\mu + \epsilon \chi) = \sup_{\varphi \in \mathcal{C}(X)} \Big[ \int_X \varphi \,d\mu + \epsilon\int_X \varphi \,d \chi - J^\star(\varphi) \Big].
    \]
    Because $J$ is differentiable, $\epsilon \mapsto J(\mu + \epsilon \chi)$ is differentiable, so by the envelope theorem \cite{milgrom2002envelope}, \[
        \frac{d}{d\epsilon}J(\mu + \epsilon \chi) \Big|_{\epsilon = 0} = \int_X \varphi^*\,d\chi,
    \]  so that $\varphi^*$ is an influence function by \autoref{thm:influence}.
    
    The abuse of notation stems from the fact that not all influence functions are maximizers. This is true, though, if $J(\mu) = \infty$ if $\mu \not\in \mathcal{P}(X)$: \begin{align*}
        &\int_X \Psi_\mu \,d\mu - J^\star(\Psi_\mu ) \\
            &\qquad= \int_X \Psi_\mu\,d\mu-\sup_{\nu \in \mathcal{P}(X)} \Big[ \int_X \Psi_\mu \,d\nu - J(\nu) \Big] \\
            &\qquad= \inf_{\nu \in \mathcal{P}(X)} \Big[ {-\int_X \Psi_\mu\,d(\nu-\mu)} + J(\nu) \Big] \\
            &\qquad= \inf_{\nu \in \mathcal{P}(X)} \Big[ {-\frac{d}{d\epsilon} J(\mu + \epsilon(\nu - \mu)) \Big|_{\epsilon=0}} + J(\nu) \Big] \\
            &\qquad \ge J(\mu),
    \end{align*} since the convex function $f(\epsilon) = J(\mu + \epsilon(\nu - \mu))$ lies above its tangent line: \[
         f(1) \ge f(0) + 1 \cdot f'(0).
    \] Since $J(\mu) = J^{\star\star}(\mu)$, we have that \[
        \int_X \Psi_\mu\,d\mu - J^\star(\Psi_\mu) \ge \sup_{\varphi \in \mathcal{C}(X)} \Big[\int_X \varphi\,d\mu - J^\star(\varphi) \Big].
    \]
\end{proof}

\autoref{thm:fenchel} motivates the following influence function approximation strategy: model $\varphi : X \to \R$ with a neural network and train it using stochastic gradient ascent on the objective $\phi \mapsto \E_{x\sim\mu}[ \varphi_\phi(x)] - J^\star(\varphi_\phi)$. The trained neural network is then an approximation to $\Psi_\mu$ suitable for use in the descent step of PFD. Under this approximation scheme, PFD can be concisely expressed as the saddle-point problem \[
    \inf_\mu \sup_\varphi \big[ \E_{x \sim \mu} [\varphi(x)] - J^\star(\varphi) \big],
\] where the inner supremum solves for the influence function (the differentiation step of PFD), and the outer infimum descends the linear approximation $\E_{x\sim \mu}[\varphi(x)]$ (the descent step of PFD), noting that $J^\star(\varphi)$ is a constant w.r.t.~$\mu$. This procedure is highly reminiscent of adversarial training \cite{goodfellow2014generative}; for this reason, we call PFD with this approximation scheme based on convex duality \textbf{adversarial PFD}. PFD therefore explains the prevalence of adversarial training as a deep learning technique and extends its applicability to any convex probability functional.

In the following sections, we demonstrate that PFD provides a broad conceptual framework for understanding a wide range of existing machine learning algorithms.

\begin{toappendix}
The following lemma will come in handy in our computations.
\begin{lemma} \label{thm:representation}
Suppose $J : \mathcal{M}(X) \to \overline{\R}$ has a representation \[
    J(\mu) = \sup_{\varphi \in \mathcal{C}(X)} \Big[\int_X \varphi\,d\mu - K(\varphi)\Big],
\] where $K : \mathcal{C}(X) \to \overline{\R}$ is proper, convex, and lower semicontinuous. Then $J^\star = K$.
\end{lemma}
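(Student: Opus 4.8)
The plan is to recognize the hypothesized representation of $J$ as itself a convex conjugate, and then to conclude by a single application of the Fenchel--Moreau (biconjugate) theorem. Using the dual pairing $\langle \mu, \varphi \rangle = \int_X \varphi \, d\mu$ between $\mathcal{M}(X)$ and $\mathcal{C}(X)$, I would introduce the conjugate of a function $K : \mathcal{C}(X) \to \overline{\R}$ in the reverse direction, namely $K^\star(\mu) = \sup_{\varphi \in \mathcal{C}(X)}\big[\int_X \varphi\,d\mu - K(\varphi)\big]$. With this notation, the hypothesis on $J$ is precisely the statement that $J = K^\star$.

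From here the conclusion is formal: we have $J^\star = (K^\star)^\star = K^{\star\star}$, so it suffices to establish $K^{\star\star} = K$. This is exactly the content of the Fenchel--Moreau theorem, which guarantees that a proper, convex, lower semicontinuous function coincides with its biconjugate. Since $K$ is assumed proper, convex, and lower semicontinuous, the theorem applies and yields $K^{\star\star} = K$, whence $J^\star = K$, as desired.

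The one point requiring care is the functional-analytic setting, since the roles of the two spaces are now reversed relative to the proof of \autoref{thm:fenchel}: here $\mathcal{C}(X)$ serves as the primal space and $\mathcal{M}(X)$ as its dual. I would justify this by recalling that $\mathcal{C}(X)$, equipped with the supremum norm, is a Banach space whose topological dual is exactly $\mathcal{M}(X)$ by the Riesz representation theorem, so that $(\mathcal{C}(X), \mathcal{M}(X))$ forms a valid dual pair to which Fenchel--Moreau applies. Moreover, because $K$ is convex, lower semicontinuity in the norm topology agrees with lower semicontinuity in the weak topology $\sigma(\mathcal{C}(X), \mathcal{M}(X))$ compatible with this pairing, so the lsc hypothesis is the correct one for the biconjugate identity.

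I expect the main (and essentially only) obstacle to be confirming this topological compatibility---verifying that $\mathcal{M}(X)$ really is the full dual of $\mathcal{C}(X)$ and that the stated lower-semicontinuity hypothesis matches what Fenchel--Moreau requires. Once that foundation is in place, the argument reduces to the one-line identity $J^\star = K^{\star\star} = K$ with no further computation.
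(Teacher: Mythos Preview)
Your proposal is correct and matches the paper's own proof essentially line for line: the paper simply notes that the hypothesis says $J = K^\star$, whence $J^\star = K^{\star\star} = K$ by Fenchel--Moreau. Your additional remarks about the dual pairing $(\mathcal{C}(X), \mathcal{M}(X))$ and the compatibility of the lower-semicontinuity hypothesis are elaborations the paper omits but which are appropriate to verify.
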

\begin{proof}
By definition of the convex conjugate, $J = K^\star$. Then $J^\star = K^{\star \star} = K$, by the Fenchel--Moreau theorem.
\end{proof}
We note that when applying this lemma, we will often implicitly define the appropriate extension of $J$ to $\mathcal{M}(X)$ to be $J(\mu) = \sup_{\varphi \in \mathcal{C}(X)} [\int \varphi\,d\mu - K(\varphi)]$. The exact choice of extension can certainly affect the exact form of the convex conjugate; see \citet{ruderman2012tighter} for one example of this phenomenon.
\end{toappendix}

\section{Generative Adversarial Networks}
Generative adversarial networks (GANs) are a technique to train a parameterized probability measure $\mu$ to mimic a data distribution $\nu$. There are many variants of the GAN algorithm. They typically take the form of a saddle-point problem, and it is known that many of them correspond to the minimization of different divergences $D(\mu || \nu)$. We complete the picture by showing that many GAN variants could have been derived as instances of PFD applied to different divergences.

\subsection{Minimax GAN}
\citet{goodfellow2014generative} originally proposed the following saddle-point problem
\[
    \inf_\mu \sup_D \tfrac{1}{2} \E_{x\sim\nu} [\log D(x)] + \tfrac{1}{2}\E_{x\sim\mu}[\log(1 - D(x))] .
\]
The interpretation of this \emph{minimax GAN} problem is that the \emph{discriminator} $D$ learns to classify between fake samples from $\mu$ and real samples from $\nu$ via a binary classification loss, while the \emph{generator} $\mu$ is trained to produce counterfeit samples that fool the classifier. It was shown that the value of the inner optimization problem equals $D_{\mathrm{JS}}(\mu || \nu) - \log 2$, where \[
    D_{\mathrm{JS}}(\mu || \nu) = \tfrac{1}{2} D_{\mathrm{KL}}(\mu || \tfrac{1}{2}\mu + \tfrac{1}{2}\nu) + \tfrac{1}{2} D_{\mathrm{KL}}(\nu || \tfrac{1}{2}\mu + \tfrac{1}{2}\nu)
\] is the Jensen--Shannon divergence, and therefore the problem corresponds to training $\mu$ to minimize the divergence between $\mu$ and $\nu$. As a practical algorithm, simultaneous stochastic gradient descent steps are performed on the discriminator's parameters $\phi$ and the generator's parameters $\theta$ using the two loss functions \begin{equation}
    \begin{cases}
    \phi \mapsto -\tfrac{1}{2} \E_{x\sim\nu} [\log D_\phi(x)] - \tfrac{1}{2}\E_{x\sim\mu_\theta}[\log(1 - D_\phi(x))], \\
    \theta \mapsto  \tfrac{1}{2}\E_{x\sim\mu_\theta}[\log(1 - D_\phi(x))],
    \end{cases}    \label{eq:mmgan}
\end{equation} where $D_\phi$ and $\mu_\theta$ are parameterized with neural networks.

Our unifying result is the following:
\begin{proposition}
Adversarial PFD on the Jensen--Shannon divergence objective \[
    J_{\mathrm{JS}}(\mu) = D_{\mathrm{JS}}(\mu || \nu).
\] yields the minimax GAN algorithm \eqref{eq:mmgan}.
\end{proposition}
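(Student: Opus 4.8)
The plan is to apply \autoref{thm:fenchel} to the functional $J_{\mathrm{JS}}(\mu) = D_{\mathrm{JS}}(\mu \| \nu)$ and show that the resulting saddle-point problem, together with the gradient-based descent step justified by \autoref{thm:chain}, reproduces exactly the two update rules in \eqref{eq:mmgan}. Concretely, adversarial PFD instantiates the saddle-point problem $\inf_\mu \sup_\varphi [\E_{x\sim\mu}[\varphi(x)] - J_{\mathrm{JS}}^\star(\varphi)]$, so the entire proof reduces to computing the convex conjugate $J_{\mathrm{JS}}^\star$ and then performing a change of variables that turns $\varphi$ into the discriminator $D$.

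First I would compute $J_{\mathrm{JS}}^\star$ by exhibiting $J_{\mathrm{JS}}$ in the variational form $J_{\mathrm{JS}}(\mu) = \sup_{\varphi}[\int \varphi\,d\mu - K(\varphi)]$ and invoking \autoref{thm:representation} to conclude $J_{\mathrm{JS}}^\star = K$. The natural route is to start from the known identity that the inner optimization in the minimax GAN objective evaluates to $D_{\mathrm{JS}}(\mu\|\nu) - \log 2$, i.e.
\[
    D_{\mathrm{JS}}(\mu\|\nu) = \log 2 + \sup_D \Big[\tfrac12 \E_{x\sim\nu}[\log D(x)] + \tfrac12 \E_{x\sim\mu}[\log(1 - D(x))]\Big].
\]
I would reparameterize so that the supremum is taken over a function $\varphi$ paired linearly against $\mu$: setting $\varphi(x) = \tfrac12 \log(1 - D(x))$ (so that $D(x) = 1 - e^{2\varphi(x)}$) isolates the $\E_{x\sim\mu}[\varphi(x)]$ term, and the remaining $\nu$-dependent piece $\tfrac12\E_{x\sim\nu}[\log(1 - e^{2\varphi(x)})] + \log 2$ becomes $-K(\varphi)$. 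Reading off $K$ and checking it is proper, convex, and lower semicontinuous then gives $J_{\mathrm{JS}}^\star(\varphi) = K(\varphi)$ via \autoref{thm:representation}.

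Next I would substitute this conjugate back into the adversarial PFD saddle-point problem and undo the change of variables, recovering the original minimax objective in $D$. The differentiation step (inner supremum over $\varphi$, equivalently over $D$) is precisely the discriminator update: maximizing $\E_{x\sim\mu}[\varphi(x)] - J_{\mathrm{JS}}^\star(\varphi)$ in $\varphi$ is the same as maximizing the binary classification objective in $D$, matching the $\phi$-loss in \eqref{eq:mmgan}. For the descent step, \autoref{thm:chain} tells us that $\nabla_\theta J_{\mathrm{JS}}(\mu_\theta) = \nabla_\theta \E_{x\sim\mu_\theta}[\hat\Psi(x)]$ with $\hat\Psi = \Psi_{\mu_\theta} = \varphi^*$ held fixed; since $\varphi^*(x) = \tfrac12\log(1 - D(x))$, this gradient is exactly $\nabla_\theta \tfrac12\E_{x\sim\mu_\theta}[\log(1 - D_\phi(x))]$, which is the $\theta$-loss in \eqref{eq:mmgan}. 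Thus the two PFD steps coincide with the two GAN updates.

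The main obstacle I anticipate is purely bookkeeping rather than conceptual: getting the change of variables and the factors of $\tfrac12$ and $\log 2$ to line up so that $K$ is genuinely the convex conjugate rather than merely a related expression, and confirming that $\varphi^*$ as defined really is the \emph{argmax} so that \autoref{thm:fenchel} applies. One subtlety worth flagging is that the reparameterization $\varphi \mapsto D$ must map onto the function class over which the supremum is genuinely attained, and that the influence function is only determined up to an additive constant (as noted after the definition of the influence function), which is exactly the freedom that absorbs the $\log 2$ offset between $D_{\mathrm{JS}}$ and the GAN value. Verifying that the optimal discriminator $D^*(x) = \nu(x)/(\mu(x)+\nu(x))$ indeed corresponds to the maximizing $\varphi^*$ would serve as a useful consistency check that the computed $J_{\mathrm{JS}}^\star$ and its argmax are correct.
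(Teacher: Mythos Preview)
Your proposal is correct and follows essentially the same route as the paper: compute $J_{\mathrm{JS}}^\star$, invoke \autoref{thm:fenchel} to obtain the argmax characterization of the influence function, then match the inner maximization to the $\phi$-step and the descent on $\E_{x\sim\mu_\theta}[\hat\Psi(x)]$ (via \autoref{thm:chain}) to the $\theta$-step of \eqref{eq:mmgan}.

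The one noteworthy difference is in how the conjugate is obtained. The paper computes $J_{\mathrm{JS}}^\star$ from the definition, performing a pointwise optimization over densities $p$ and eliminating $p$ in favor of $\varphi$; this is what yields the explicit formula $J_{\mathrm{JS}}^\star(\varphi) = -\tfrac12\E_{x\sim\nu}[\log(1-e^{2\varphi(x)+\log 2})] - \tfrac12\log 2$ and the substitution $\varphi = \tfrac12\log(1-D) - \tfrac12\log 2$. You instead start from Goodfellow's identity, reparameterize $D\mapsto\varphi$, and read off $K$ as the conjugate via \autoref{thm:representation}. Your route is slightly slicker and is exactly what \autoref{thm:representation} is for, but it leans on the pre-existing variational identity for $D_{\mathrm{JS}}$, whereas the paper's direct computation is self-contained. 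The two substitutions differ by the additive constant $\tfrac12\log 2$, which (as you correctly note) is absorbed by the additive freedom in the influence function and corresponds to a different implicit extension of $J_{\mathrm{JS}}$ to $\mathcal{M}(X)$; neither affects the resulting PFD updates.
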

That is, the minimax GAN could have been derived mechanically and from first principles as an instance of adversarial PFD. To build intuition, we note that the discriminator plays the role of the approximate influence function:
\begin{propositionrep}
Suppose $\mu$ has density $p(x)$ and $\nu$ has density $q(x)$. Then the influence function for $J_{\mathrm{JS}}$ is \[
    \Psi_{\mathrm{JS}}(x) = \frac{1}{2} \log \frac{p(x)}{p(x)+q(x)}.
\]
\end{propositionrep}
\begin{proof}
    The result follows from \autoref{thm:influence}:
    \begin{align*}
        &\frac{d}{d\epsilon} J_{\mathrm{JS}}(\mu + \epsilon \chi) \Big|_{\epsilon = 0} \\
            &\qquad= \frac{1}{2}\int_X \frac{d}{d\epsilon}\Big[ (p+\epsilon\chi) \log \frac{p+\epsilon\chi}{\frac{1}{2}(p+\epsilon\chi)+ \frac{1}{2}q}  \\
            &\qquad\qquad\qquad\qquad  + q\log \frac{q}{\frac{1}{2}(p+\epsilon \chi) + \frac{1}{2}q}\Big]_{\epsilon=0} dx \\
            &\qquad= \frac{1}{2}\int_X\Big[ \log \frac{p}{\frac{1}{2}p+\frac{1}{2}q} + 1 - \frac{p}{p+q} - \frac{q}{p+q} \Big]\chi\, dx \\
            &\quad = \frac{1}{2}\int_X \Big[\log \frac{p}{p+q} + \log 2\Big] \chi\,dx.
    \end{align*}
\end{proof}
Recall that in the minimax GAN, the optimal discriminator $D^*$ satisfies $D^*(x) = \frac{q(x)}{p(x)+q(x)}$, so the influence function $\Psi_{\mathrm{JS}}(x) = \frac{1}{2}\log(1 - D^*(x))$ is approximated using the learned  discriminator.

Now, we rederive the minimax GAN problem \eqref{eq:mmgan} as a form of adversarial PFD. We compute:
\begin{propositionrep}
The convex conjugate of $J_{\mathrm{JS}}$ is \[
    J_{\mathrm{JS}}^\star(\varphi) = -\tfrac{1}{2}\E_{x\sim\nu}[ \log(1 - e^{2\varphi(x)+\log 2}) ] - \tfrac{1}{2}\log 2.
\]
\end{propositionrep}
\begin{proof}
\begin{align*}
    J_{\mathrm{JS}}^\star(\varphi)
        &= \sup_{\mu \in \mathcal{M}(X)} \Big[ \int_X \varphi \,d\mu - J_{\mathrm{JS}}(\mu) \Big] \\
        &= \sup_{p} \int_X \Big[ \varphi p - \frac{1}{2}p\log \frac{p}{\frac{1}{2}p+\frac{1}{2}q} - \frac{1}{2}q\log\frac{q}{\frac{1}{2}p+\frac{1}{2}q}\Big]\,dx.
\end{align*}
Setting the integrand's derivative w.r.t.~$p$ to $0$, we find that pointwise, the optimal $p$ satisfies \[
    \varphi = \frac{1}{2}\log \frac{p}{\frac{1}{2}p+\frac{1}{2}q}.
\] We eliminate $p$ in the integrand. Notice that the first two terms in the integrand cancel after plugging in $p$. Since \[
    \frac{q}{\frac{1}{2}p + \frac{1}{2}q} = 2\Big(1 - \frac{p}{p+q}\Big) = 2(1 - 2 e^{2\varphi}),
\] we obtain that \[
    J_{\mathrm{JS}}^\star(\varphi) = -\frac{1}{2}\int_X q \log(1 - 2e^{2\varphi}) \,dx - \frac{1}{2}\log 2.
\]
\end{proof}
\autoref{thm:fenchel} yields the representation \[
    \Psi_{\mathrm{JS}} = \argmax_{\varphi \in \mathcal{C}(X)} \Big[ \E_{x\sim\mu}[ \varphi(x)] +\tfrac{1}{2}\E_{x\sim\nu} [\log(1 - e^{2\varphi(x) + \log 2})] \Big],
\] an ascent step on which is the $\phi$-step in \eqref{eq:mmgan} with the substitution $\varphi = \frac{1}{2}\log (1-D) - \frac{1}{2}\log 2$. The descent step corresponds to updating $\mu$ to decrease the linear approximation $\E_{x \sim \mu} [\varphi(x)]$, which corresponds to the $\theta$-step in \eqref{eq:mmgan}. In fact, a similar argument can be applied to the $f$-GANs of \citet{nowozin2016f}, which generalize the minimax GAN. The observation that $f$-GANs (and hence the minimax GAN) can be derived through convex duality was also noted by \citet{farnia2018convex}.

\subsection{Non-saturating GAN}
\citet{goodfellow2014generative} also proposed an alternative to \eqref{eq:mmgan} called the \emph{non-saturating GAN}, which prescribes descent steps on \[  
    \begin{cases}
    \phi \mapsto -\frac{1}{2} \E_{x\sim\nu}[ \log D_\phi(x)] - \tfrac{1}{2}\E_{x\sim\mu_\theta}[\log(1 - D_\phi(x))], \\
    \theta \mapsto -\frac{1}{2}\E_{x\sim\mu_\theta}[\log D_\phi(x) ].
    \end{cases}
\] In the step on the generator's parameters $\theta$, the $\log(1-D_\phi)$ in the minimax GAN has been replaced with ${-\log D_\phi}$. This heuristic change prevents gradients to $\theta$ from converging to $0$ when the discriminator is too confident, and it is for this reason that the loss for $\theta$ is called the non-saturating loss. 

We consider a slightly modified problem, in which the original minimax loss and the non-saturating loss are summed (and scaled by a factor of $2$):
\begin{equation} \label{eq:nsgan}
     \begin{cases}
    \phi \mapsto -\frac{1}{2} \E_{x\sim\nu}[ \log D_\phi(x)] - \tfrac{1}{2}\E_{x\sim\mu_\theta}[\log(1 - D_\phi(x))],  \\
    \theta \mapsto - \E_{x\sim\mu_\theta}[ \log D_\phi(x)] + \E_{x\sim\mu_\theta} [\log(1 - D_\phi(x))] .
    \end{cases}
\end{equation} 

This also prevents gradients to $\theta$ from saturating, achieving the same goal as the non-saturating GAN. \citet{huszar2016alternative} and \citet{arjovsky2017towards} recognize that this process minimizes $D_{\text{KL}}(\mu||\nu)$.\footnote{The derivation of \citet{huszar2016alternative} omits showing that the dependence of $\frac{q(x)}{p_\theta(x)}$ on $\theta$ can be ignored, but the result is proved by Theorem 2.5 of \citet{arjovsky2017towards}. We remark that this result can be seen as a corollary of \autoref{thm:chain} and \autoref{thm:nsganderivative}.}

We claim the following:
\begin{proposition}
PFD on the reverse Kullback--Liebler divergence objective \[
    J_{\mathrm{NS}}(\mu) = D_{\mathrm{KL}}(\mu||\nu),
\] using the binary classification likelihood ratio estimator to approximate the influence function, yields the modified non-saturating GAN optimization problem \eqref{eq:nsgan}.
\end{proposition}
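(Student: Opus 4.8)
The plan is to verify the two steps of PFD separately: first compute the exact influence function $\Psi_{\mathrm{NS}}$ of the reverse-KL functional, then show that the ``binary classification likelihood ratio estimator'' reproduces the $\phi$-step of \eqref{eq:nsgan}, and finally that the generic descent step reproduces the $\theta$-step.

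For the influence function I would apply \autoref{thm:influence} directly, mirroring the computation already carried out for $J_{\mathrm{JS}}$. Writing $\mu$ with density $p$ and $\nu$ with density $q$, so that $J_{\mathrm{NS}}(\mu) = \int_X p \log\frac{p}{q}\,dx$, I would differentiate $\int_X (p+\epsilon\chi)\log\frac{p+\epsilon\chi}{q}\,dx$ at $\epsilon = 0$ to obtain
\[
    \frac{d}{d\epsilon} J_{\mathrm{NS}}(\mu+\epsilon\chi)\Big|_{\epsilon=0} = \int_X \Big[\log\frac{p(x)}{q(x)} + 1\Big]\,\chi(dx),
\]
so that $\Psi_{\mathrm{NS}}(x) = \log\frac{p(x)}{q(x)} + 1$. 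Since influence functions are determined only up to an additive constant, I would drop the constant and work with the log-likelihood ratio $\Psi_{\mathrm{NS}}(x) = \log\frac{p(x)}{q(x)}$.

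The crucial step is to identify the binary classification estimator of this log-ratio with the $\phi$-step. Here I would recall that solving the inner binary classification problem $\sup_D \tfrac{1}{2}\E_{x\sim\nu}[\log D(x)] + \tfrac{1}{2}\E_{x\sim\mu}[\log(1-D(x))]$ pointwise yields the optimal discriminator $D^*(x) = \frac{q(x)}{p(x)+q(x)}$, exactly as in the minimax GAN. Consequently $\frac{1-D^*(x)}{D^*(x)} = \frac{p(x)}{q(x)}$, and therefore the estimator $\hat\Psi(x) = \log(1-D_\phi(x)) - \log D_\phi(x)$ approximates $\Psi_{\mathrm{NS}}$ once $D_\phi$ has been trained to approximate $D^*$. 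The $\phi$-step in \eqref{eq:nsgan} is precisely a descent step on the negative of this classification objective, so it realizes the differentiation step of PFD.

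It remains to match the descent step. Substituting the estimator into the PFD descent objective gives
\[
    \E_{x\sim\mu_\theta}[\hat\Psi(x)] = \E_{x\sim\mu_\theta}[\log(1-D_\phi(x))] - \E_{x\sim\mu_\theta}[\log D_\phi(x)],
\]
and by \autoref{thm:chain} the gradient $\nabla_\theta \E_{x\sim\mu_\theta}[\hat\Psi(x)]$, with $\hat\Psi$ held fixed, equals the gradient descended in the $\theta$-step of \eqref{eq:nsgan}, completing the identification. I expect the main obstacle to be not any single computation but the conceptual bridge in the third step: recognizing that a trained binary classifier is an estimator of the log-likelihood-ratio influence function, so that the heuristic non-saturating modification of \citet{goodfellow2014generative} emerges automatically as PFD on reverse KL rather than as an ad hoc fix.
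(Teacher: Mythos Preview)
Your proposal is correct and follows essentially the same route as the paper: compute $\Psi_{\mathrm{NS}}(x)=\log\frac{p(x)}{q(x)}$ via \autoref{thm:influence}, observe that the Bayes-optimal discriminator $D^*(x)=\frac{q(x)}{p(x)+q(x)}$ makes $\log\frac{1-D_\phi}{D_\phi}$ an estimator of this influence function, and then read off the $\phi$- and $\theta$-steps of \eqref{eq:nsgan} as the differentiation and descent steps of PFD. The only cosmetic difference is that the paper disposes of the $+1$ by using $\int_X d\chi=0$ directly in the derivative computation rather than invoking the additive-constant freedom of influence functions afterward.
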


\begin{propositionrep} \label{thm:nsganderivative}
Suppose $\mu$ has density $p(x)$ and $\nu$ has density $q(x)$. The influence function for $J_{\mathrm{NS}}$ is \[
    \Psi_{\mathrm{NS}}(x) = \log \frac{p(x)}{q(x)}.
\]
\end{propositionrep}
\begin{proof}
The result follows from \autoref{thm:influence}:
\begin{align*}
    &\frac{d}{d\epsilon} J_{\mathrm{NS}}(\mu + \epsilon \chi) \Big|_{\epsilon=0}\\
        &\qquad= \frac{d}{d\epsilon} \int_X (p+\epsilon\chi) \log \frac{p+\epsilon\chi}{q}\,dx \Big|_{\epsilon=0} \\
        &\qquad= \int_X \Big[ \chi \log \frac{p}{q} + \chi\Big]\,dx \\
        &\qquad= \int_X \Big[ \log \frac{p}{q} + 1\Big]\,d\chi \\
        &\qquad= \int_X \Big[ \log \frac{p}{q} \Big]\,d\chi.
\end{align*}
\end{proof}

Now, because the binary classification loss \begin{equation}
    D \mapsto -\tfrac{1}{2} \E_{x\sim\nu}[ \log D(x)] - \tfrac{1}{2}\E_{x\sim\mu_\theta}[\log(1 - D(x))] \label{eq:logisticregression},
\end{equation} is minimized by $D(x) = \frac{q(x)}{p(x) + q(x)}$, one estimator for $\Psi_{\mathrm{NS}}$ is simply \[
     \Psi_{\mathrm{NS}}(x) \approx \log \frac{1 - D_\phi(x)}{D_\phi(x)},
\] where $\phi$ is updated as in the $\phi$-step in \eqref{eq:nsgan}. With this approximation scheme, the differentiation step and the descent step in PFD correspond exactly to the $\phi$-step and $\theta$-step respectively in \eqref{eq:nsgan}. Once again, the discriminator serves to approximate the influence function.

\subsection{Wasserstein GAN}
\citet{arjovsky2017wasserstein} propose solving the following saddle-point problem \[
    \inf_\mu \sup_{||D||_L \le 1} \big[ \E_{x\sim\mu}[D(x)] - \E_{x\sim\nu}[D(x)] \big],  
\] where $||D||_L$ denotes the Lipschitz constant of $D$. The corresponding practical algorithm amounts to simultaneous descent steps on
\begin{equation}
\begin{cases}
    \phi \mapsto \E_{x\sim\mu_\theta}[D_\phi(x)] - \E_{x\sim\nu}[D_\phi(x)],  \\
    \theta \mapsto - \E_{x\sim\mu_\theta}[D_\phi(x)], 
\end{cases}\label{eq:wgan}
\end{equation} where $D_\phi$ is reprojected back to the space of $1$-Lipschitz functions after each $\phi$-step. Here, $\mu_\theta$ is again the generator, and $D_\phi$ is the discriminator, sometimes called the critic. This algorithm is called the Wasserstein GAN algorithm, so named because this algorithm approximately minimizes the $1$-Wasserstein distance $W_1(\mu, \nu)$; the motivation for the $\phi$-step in \eqref{eq:wgan} is so that the discriminator learns the \emph{Kantorovich potential} that describes the optimal transport from $\mu$ to $\nu$. See e.g.~\citet{villani2008optimal} for the full optimal transport details.

We claim that the Wasserstein GAN too is an instance of PFD, and once again, the discriminator plays the role of approximate influence function:
\begin{proposition}
Adversarial PFD on the Wasserstein distance objective \[
    J_{\mathrm{W}}(\mu) = W_1(\mu, \nu)
\] yields the Wasserstein GAN algorithm \eqref{eq:wgan}.
\end{proposition}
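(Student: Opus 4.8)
The plan is to follow the same template used above for the minimax GAN, exploiting the fact that the $1$-Wasserstein distance already comes packaged in exactly the variational form that \autoref{thm:representation} and \autoref{thm:fenchel} demand. The engine of the whole argument is the Kantorovich--Rubinstein duality
\[
    W_1(\mu,\nu) = \sup_{\|\varphi\|_L \le 1} \Big[ \E_{x\sim\mu}[\varphi(x)] - \E_{x\sim\nu}[\varphi(x)] \Big],
\]
which I would simply cite (see e.g.~\citet{villani2008optimal}). Unlike the Jensen--Shannon and reverse-KL cases, there is no pointwise optimization over a density to carry out here: the dual problem is already a supremum over $\mathcal{C}(X)$ of an expression affine in $\mu$.

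First I would read off the convex conjugate. Setting $K(\varphi) = \E_{x\sim\nu}[\varphi(x)]$ when $\|\varphi\|_L \le 1$ and $K(\varphi) = +\infty$ otherwise, the duality formula says precisely $J_{\mathrm{W}}(\mu) = \sup_{\varphi\in\mathcal{C}(X)}[\int_X \varphi\,d\mu - K(\varphi)]$. One checks that $K$ is proper (it is finite at every constant function, which has Lipschitz constant $0$), convex (a linear functional plus the indicator of the convex set $\{\|\varphi\|_L \le 1\}$), and lower semicontinuous, so \autoref{thm:representation} yields $J_{\mathrm{W}}^\star = K$. Applying \autoref{thm:fenchel} then gives
\[
    \Psi_{\mathrm{W}} = \argmax_{\|\varphi\|_L \le 1} \Big[ \E_{x\sim\mu}[\varphi(x)] - \E_{x\sim\nu}[\varphi(x)] \Big],
\]
i.e.~the influence function is exactly the Kantorovich potential for transporting $\mu$ to $\nu$. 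It then remains to match this with \eqref{eq:wgan}. Under the identification $\varphi = -D_\phi$, maximizing $\E_{x\sim\mu}[\varphi] - \E_{x\sim\nu}[\varphi]$ over the $1$-Lipschitz ball is identical to descending $\phi \mapsto \E_{x\sim\mu_\theta}[D_\phi] - \E_{x\sim\nu}[D_\phi]$, the differentiation ($\phi$) step, with the reprojection onto $1$-Lipschitz functions enforcing the constraint $\|\varphi\|_L \le 1$; and decreasing the linear approximation $\E_{x\sim\mu}[\varphi] = -\E_{x\sim\mu_\theta}[D_\phi]$ is the descent ($\theta$) step. This reproduces \eqref{eq:wgan} and identifies the critic with the (negated) influence function, as claimed.

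The main obstacle I anticipate is the hypothesis in \autoref{thm:fenchel} that the maximizer actually exists, which here is exactly the assertion that a Kantorovich potential exists in $\mathcal{C}(X)$. I would justify this by compactness: since influence functions are determined only up to an additive constant and the objective $\varphi \mapsto \E_\mu[\varphi] - \E_\nu[\varphi]$ is invariant under $\varphi \mapsto \varphi + c$ (both $\mu,\nu$ have unit mass), I may restrict to the slice $\{\varphi : \|\varphi\|_L \le 1,\ \varphi(x_0) = 0\}$ for a fixed $x_0$; this set is uniformly bounded and equicontinuous, hence relatively compact in $\mathcal{C}(X)$ by Arzel\`a--Ascoli, while the objective is weak-continuous, so the supremum is attained. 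The remaining technical points — convexity and weak lower semicontinuity of $\mu \mapsto W_1(\mu,\nu)$, and the fact that the sup representation is the operative extension of $J_{\mathrm{W}}$ to signed measures in $\mathcal{M}(X)$ — all follow from $W_1(\cdot,\nu)$ being a supremum of weak-continuous affine functionals, and I expect each to cost only a sentence.
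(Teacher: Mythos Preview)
Your proposal is correct and follows essentially the same route as the paper: invoke Kantorovich--Rubinstein duality, read off $J_{\mathrm{W}}^\star(\varphi)=\E_{x\sim\nu}[\varphi(x)]+\{\|\varphi\|_L\le 1\}$ via \autoref{thm:representation}, apply \autoref{thm:fenchel} to identify the influence function with the Kantorovich potential, and match the resulting adversarial PFD updates to \eqref{eq:wgan}. Your extra care about existence of the maximizer (via Arzel\`a--Ascoli on the normalized Lipschitz ball) and the properness/convexity/lower-semicontinuity checks on $K$ are points the paper leaves implicit, and your sign identification $\varphi=-D_\phi$ is merely a cosmetic way of reconciling the ``descent on $\phi$'' phrasing with the $\argmax$ over $\varphi$, since the Lipschitz unit ball is symmetric.
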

\begin{propositionrep} \label{thm:wganderivative}
The influence function for $J_{\mathrm{W}}$ is the Kantorovich potential corresponding to the optimal transport from $\mu$ to $\nu$.
\end{propositionrep}
\begin{proof}
    See \citet{santambrogio2015functionals}, Proposition 7.17.
\end{proof}

We remark that the gradient computation in Theorem 3 of \citet{arjovsky2017wasserstein} is a corollary of \autoref{thm:chain} and \autoref{thm:wganderivative}. Now, we show that the Wasserstein GAN algorithm can be derived mechanically via convex duality. The connection between the Wasserstein GAN and convex duality was also observed by \citet{farnia2018convex}.
\begin{propositionrep}
The convex conjugate of $J_{\mathrm{W}}$ is \[
    J_{\mathrm{W}}^\star(\varphi)
        = \E_{x\sim\nu}[\varphi(x)] + \{ ||\varphi||_L \le 1 \}.
\]
\end{propositionrep}
We use the notation $\{ A \}$ to denote the convex indicator function, which is $0$ if $A$ is true and $\infty$ if $A$ is false.
\begin{proof}
Using Kantorovich--Rubinstein duality, we have that \begin{align*}
    J_{\mathrm{W}}(\mu) 
        &= \sup_{||\varphi||_L \le 1} \Big[ \int_X \varphi\,d\mu - \int_X \varphi\,d\nu \Big] \\
        &= \sup_{\varphi} \Big[ \int_X \varphi\,d\mu - \int_X \varphi\,d\nu - \{ ||\varphi||_L \le 1 \}\Big],
\end{align*} where we use the notation \[
    \{ A \} = \begin{cases}
        0 & \text{$A$ is true,}\\
        \infty & \text{$A$ is false}.
    \end{cases}
\]
By \autoref{thm:representation}, \[
    J_{\mathrm{W}}^\star(\varphi) = \int_X \varphi\,d\nu + \{ ||\varphi||_L \le 1 \}.
\] 
\end{proof}

\autoref{thm:fenchel} yields the representation \[
    \Psi_{\mathrm{W}}
    = \argmax_{\varphi \in \mathcal{C}(X)} \big[ \E_{x\sim\mu_\theta}[\varphi(x)] - \E_{x\sim\nu}[\varphi(x)] - \{ ||\varphi||_L \le 1 \} \big]. 
\] The adversarial PFD differentiation step therefore corresponds exactly to the $\phi$-step in \eqref{eq:wgan}, and the PFD descent step is exactly the $\theta$-step in \eqref{eq:wgan}.

\section{Variational Inference}
In Bayesian inference, the central object is the posterior distribution \[
    p(z|x) = \frac{p(x|z)p(z)}{p(x)} = \frac{p(x|z)p(z)}{\int p(x|z)p(z) \,dz},
\] where $x$ is an observed datapoint, $p(x|z)$ is the likelihood, $p(z)$ is the prior. Unfortunately, the posterior is difficult to compute due to the presence of the integral. Variational inference therefore reframes this computation as an optimization problem in which a \emph{variational posterior} $q(z)$ approximates the true posterior by solving \[
    \inf_q D_{\text{KL}}(q(z) || p(z|x)).
\]

\subsection{Black-box variational inference}
This objective is not directly optimizable, due to the presence of the intractable $p(z|x)$ term. The tool of choice for variational inference is the \emph{evidence lower bound} (ELBO), which rewrites \[
    D_{\text{KL}}(q(z) || p(z|x)) = \log p(x) - \underbrace{\E_{z \sim q(z)}\Big[\log \frac{p(x|z)p(z)}{q(z)}\Big]}_{\text{ELBO}}.  
\] Because $\log p(x)$ is fixed, we may maximize the ELBO to minimize the KL divergence. The advantage of doing so is that all the terms inside the expectation are now tractable to evaluate, and thus the expectation may be approximated through Monte Carlo sampling. This leads to the following practical algorithm, namely stochastic gradient descent on the objective \begin{equation}
    \theta \mapsto -\E_{z\sim q_\theta(z)}\Big[ \log \frac{p(x|z)p(z)}{q_\theta(z)} \Big].  \label{eq:bbvi}
\end{equation} This is called black-box variational inference \cite{ranganath2014black}. \citet{roeder2017sticking} later recognized that ignoring the $\theta$-dependence of the term in the expectation yields the same gradients in expectation; it is this variant that we consider. Our unification result is the following:
\begin{proposition}
PFD on the variational inference objective \[
    J_{\mathrm{VI}}(q) =  D_{\mathrm{KL}}(q(z)||p(z|x)),
\] using exact influence functions, yields the black-box variational inference algorithm \eqref{eq:bbvi}.
\end{proposition}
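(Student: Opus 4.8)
The plan is to execute the two steps of PFD explicitly: first compute the influence function $\Psi_{\mathrm{VI}}$ exactly (the differentiation step requires no approximation here, matching the ``Exact'' entry of \autoref{fig:approximations}), and then feed it into \autoref{thm:chain} to identify the resulting descent-step gradient with the black-box VI update \eqref{eq:bbvi}.

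For the influence function I would mirror the computation in \autoref{thm:nsganderivative}, since $J_{\mathrm{VI}}(q) = \int q(z)\log\frac{q(z)}{p(z|x)}\,dz$ has exactly the reverse-KL form $D_{\mathrm{KL}}(q \| p(z|x))$, with $q$ in the role of $\mu$ and $p(z|x)$ in the role of $\nu$. Applying \autoref{thm:influence} and differentiating under the integral gives $\frac{d}{d\epsilon}J_{\mathrm{VI}}(q+\epsilon\chi)\big|_{\epsilon=0} = \int \big[\log\frac{q(z)}{p(z|x)} + 1\big]\,d\chi = \int \log\frac{q(z)}{p(z|x)}\,d\chi$, where the $+1$ drops out because $\int d\chi = 0$ for $\chi = \nu - q$ a difference of probability measures. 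Hence the exact influence function is $\Psi_{\mathrm{VI}}(z) = \log\frac{q(z)}{p(z|x)}$.

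The key rewriting is to expand $p(z|x) = p(x,z)/p(x)$, giving $\Psi_{\mathrm{VI}}(z) = \log\frac{q(z)}{p(x,z)} + \log p(x)$. Since $\log p(x)$ is constant in $z$, and the text notes that $\Psi_\mu + c$ is again an influence function for any constant $c$, I may take the representative $\hat\Psi(z) = \log\frac{q(z)}{p(x,z)} = -\log\frac{p(x|z)p(z)}{q(z)}$, which is precisely the negated ELBO integrand. Freezing $\hat\Psi$ at the current iterate $q = q_{\theta_0}$ and invoking \autoref{thm:chain} with the parameterization $\theta \mapsto q_\theta$, the descent step amounts to gradient descent on $\theta \mapsto \E_{z\sim q_\theta}[\hat\Psi(z)] = -\E_{z\sim q_\theta}\big[\log\frac{p(x|z)p(z)}{q_{\theta_0}(z)}\big]$, which is exactly \eqref{eq:bbvi}.

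The main obstacle is conceptual rather than computational: reconciling how the $\theta$-dependence is handled. In \eqref{eq:bbvi} the density $q_\theta$ sits inside the expectation, whereas \autoref{thm:chain} requires the influence function $\hat\Psi$ to be held fixed (frozen at $\theta_0$) during differentiation. I would resolve this by observing that this frozen form is precisely the variant of \citet{roeder2017sticking} that the paper has already declared it is considering; the legitimacy of the identification rests on the fact that the frozen and unfrozen gradients agree in expectation, since the extra score-function term $\E_{z\sim q_\theta}[\nabla_\theta \log q_\theta(z)] = 0$ vanishes. With this alignment in place, PFD using exact influence functions reproduces black-box variational inference.
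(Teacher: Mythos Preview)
Your proposal is correct and follows essentially the same approach as the paper: compute the influence function of $D_{\mathrm{KL}}(q\|p(z|x))$ via \autoref{thm:influence}, drop additive constants to obtain the negative ELBO integrand $\log\frac{q(z)}{p(x|z)p(z)}$, and then identify the descent step (via \autoref{thm:chain}) with the \citet{roeder2017sticking} variant of \eqref{eq:bbvi} in which the $\theta$-dependence inside the expectation is frozen. The paper's own argument is identical in substance; it simply absorbs the $\log p(x)$ constant directly into the statement of $\Psi_{\mathrm{VI}}$ rather than presenting it as a separate rewriting step.
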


In fact, the influence function turns out to be precisely the inside of the negative ELBO bound:
\begin{propositionrep}
\label{thm:viderivative}
The influence function for $J_{\mathrm{VI}}$ is
    \[
        \Psi_{\mathrm{VI}}(z) = \log \frac{q(z)}{p(x|z)p(z)}.
    \]
\end{propositionrep}
\begin{proof}
The result follows from \autoref{thm:influence}:
\begin{align*}
    &\frac{d}{d\epsilon} J_{\mathrm{VI}}(q + \epsilon \chi) \Big|_{\epsilon=0}\\
        &\qquad= \frac{d}{d\epsilon} \int (q(z)+\epsilon\chi(z)) \log \frac{q(z)+\epsilon\chi(z)}{p(z|x)}\,dz \Big|_{\epsilon=0} \\
        &\qquad= \int \Big[ \chi(z) \log \frac{q(z)+\epsilon\chi(z)}{p(z|x)} + \chi(z)\Big]\,dz \Big|_{\epsilon=0} \\
        &\qquad= \int \Big[ \log \frac{q(z)}{p(z|x)} + 1\Big]\,\chi(z)\,dz  \\
        &\qquad= \int \Big[ \log \frac{q(z)}{p(x|z)p(z)} + \log p(x)+ 1\Big]\,\chi(z)\,dz  \\
        &\qquad= \int  \log \frac{q(z)}{p(x|z)p(z)}\,\chi(z)\,dz.
\end{align*}
\end{proof}

In this context, the influence function can be evaluated exactly, so the differentiation step of PFD may be performed without approximation. The descent step of PFD becomes exactly the descent step on $\theta$ of \eqref{eq:bbvi}, where the $\theta$-dependence of the term in the expectation is ignored. We remark that the argument of \citet{roeder2017sticking} that this $\theta$-dependence can be ignored can be seen as a corollary of \autoref{thm:chain} and \autoref{thm:viderivative}.

\subsection{Adversarial variational Bayes}
When the density function of the prior $p(z)$ or the variational posterior $q(z|x)$ is not available, adversarial variational Bayes \cite{mescheder2017adversarial} may be employed. Here, the quantity $\log \frac{q(z)}{p(z)}$ is approximated by a neural network $f_\phi(z)$ through a binary classification problem, much like \eqref{eq:logisticregression}. The resulting algorithm applies simultaneous descent steps on
\begin{equation}
    \begin{cases}
    \phi\mapsto -\E_{q_\theta(z)}[ \log \sigma(f_\phi(z))]-\E_{p(z)}[ \log (1 - \sigma(f_\phi(z)))] \\
    \theta\mapsto -\E_{q_\theta(z)} [-f_\phi(z) + \log p(x|z)].
    \end{cases} \label{eq:avb}
\end{equation}

This algorithm is another instance of PFD:
\begin{proposition} \label{thm:avb}
PFD on the variational inference objective $J_{\mathrm{VI}}$, using the binary classification likelihood ratio estimator to approximate the influence function, yields adversarial variational Bayes \eqref{eq:avb}.
\end{proposition}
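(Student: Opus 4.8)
The plan is to recognize adversarial variational Bayes as an instance of PFD on $J_{\mathrm{VI}}$ in which the binary classifier implements the differentiation step and the $\theta$-update implements the descent step. The starting point is \autoref{thm:viderivative}, which gives the exact influence function $\Psi_{\mathrm{VI}}(z) = \log \frac{q(z)}{p(x|z)p(z)}$. Since only the densities $q(z)$ and $p(z)$ are assumed unavailable, I would split this as
\[
    \Psi_{\mathrm{VI}}(z) = \log \frac{q(z)}{p(z)} - \log p(x|z),
\]
isolating the single intractable piece $\log \frac{q(z)}{p(z)}$ while leaving $\log p(x|z)$, which is tractable by assumption, to be computed exactly.

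Next I would show that the $\phi$-step in \eqref{eq:avb} is precisely the binary classification likelihood ratio estimator for $\log \frac{q(z)}{p(z)}$, mirroring the logistic regression analysis used for the non-saturating GAN. Here the classifier $\sigma(f_\phi(z))$ is trained to discriminate samples drawn from $q_\theta(z)$ against samples drawn from $p(z)$; exactly as with \eqref{eq:logisticregression}, this binary classification loss is minimized at $\sigma(f_\phi^*(z)) = \frac{q(z)}{q(z) + p(z)}$, so that
\[
    f_\phi^*(z) = \log \frac{\sigma(f_\phi^*(z))}{1 - \sigma(f_\phi^*(z))} = \log \frac{q(z)}{p(z)}.
\]
The trained network therefore yields the approximate influence function $\hat\Psi(z) = f_\phi(z) - \log p(x|z) \approx \Psi_{\mathrm{VI}}(z)$, establishing the $\phi$-step as the differentiation step of PFD.

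Finally, I would verify that the $\theta$-step is the PFD descent step. Descending the linear approximation amounts to gradient descent on $\theta \mapsto \E_{z \sim q_\theta}[\hat\Psi(z)]$, which by \autoref{thm:chain} coincides with $\nabla_\theta J_{\mathrm{VI}}(q_\theta)$ when $\hat\Psi$ is held fixed as a function of $z$. Substituting $\hat\Psi(z) = f_\phi(z) - \log p(x|z)$ gives the objective $\E_{q_\theta(z)}[f_\phi(z) - \log p(x|z)] = -\E_{q_\theta(z)}[-f_\phi(z) + \log p(x|z)]$, which is exactly the $\theta$-step in \eqref{eq:avb}.

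I expect the main subtlety to lie not in any individual calculation but in the bookkeeping of the approximation: correctly attributing the exactly-computed $\log p(x|z)$ term and the classifier-approximated $\log \frac{q(z)}{p(z)}$ term to the appropriate parts of \eqref{eq:avb}, and justifying that $\hat\Psi$ may be treated as a function of $z$ independent of $\theta$ when taking the descent gradient. This last point is precisely what \autoref{thm:chain} licenses, so the argument reduces to assembling already-established ingredients rather than proving anything genuinely new.
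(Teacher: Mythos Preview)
Your proposal is correct and follows essentially the same approach as the paper: split $\Psi_{\mathrm{VI}}(z) = \log\frac{q(z)}{p(z)} - \log p(x|z)$, approximate the first term via the binary classifier so that $\hat\Psi(z) = f_\phi(z) - \log p(x|z)$, and identify the $\phi$- and $\theta$-steps of \eqref{eq:avb} with the differentiation and descent steps of PFD. The paper's argument is in fact terser than yours, simply asserting the approximation $\Psi_{\mathrm{VI}}(z) \approx f_\phi(z) - \log p(x|z)$ and the correspondence of the two steps; your explicit verification that the logistic-regression optimum satisfies $f_\phi^*(z) = \log\frac{q(z)}{p(z)}$ and your invocation of \autoref{thm:chain} for the descent step fill in details the paper leaves implicit.
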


It is easily seen that \[
    \Psi_{\mathrm{VI}}(z) = \log \frac{q(z)}{p(x|z)p(z)} \approx f_\phi(z) - \log p(x|z).
\] Therefore, the $\phi$-step of \eqref{eq:avb} is the differentiation step of PFD, and the $\theta$-step of \eqref{eq:avb} is the descent step. We remark that the gradient computation in Proposition 2 of \citet{mescheder2017adversarial} is a corollary of \autoref{thm:chain} and \autoref{thm:viderivative}.


\section{Reinforcement Learning}
In a Markov decision process, the distribution of states $s=(s_0, s_1, \ldots)$, actions $a=(a_1, a_2, \ldots)$, and rewards $r=(r_1, r_2, \ldots)$ is governed by the distribution \[
    \mathbb{P}(s, a, r) = p_0(s_0)\prod_{t=1}^\infty p(s_t, r_t|s_{t-1},a_t)\,\pi(a_t|s_{t-1}),
\] where $p_0(s)$ is an initial distribution over states, $p(s',r|s,a)$ gives the transition probability of arriving at state $s'$ with reward $r$ from a state $s$ taking an action $a$, and $\pi(a|s)$ is a policy that gives the distribution of actions taken when in state $s$. In reinforcement learning, we are interested in learning the policy $\pi(a|s)$ that maximizes the expected discounted reward $\E [\sum_{t=1}^\infty \gamma^{t-1} r_t]$, where $0 < \gamma < 1$ is a discount factor, while assuming we only have access to samples from $p_0$ and $p$.

\subsection{Policy iteration}
\label{sec:policyiteration}
Policy iteration \cite{howard1960dynamic,sutton1998introduction} is one scheme that solves the reinforcement learning problem. It initializes $\pi(s|a)$ arbitrarily and then cycles between two steps, policy evaluation and policy improvement. In the policy evaluation step, the state-action value function $Q^\pi(s,a)$ is computed. In the policy improvement step, the policy is updated to the greedy policy, the policy that at state $s$ takes the action $\argmax_a Q^\pi(s,a)$ with probability $1$.

Before we present our unification result, we introduce an arbitrary distribution over states $\pi(s)$ and consider the joint distribution $\pi(s,a)=\pi(s) \pi(a|s)$, so that $\pi$ is one probability distribution rather than one for every state $s$. Now:
\begin{proposition}
    PFD on the reinforcement learning objective \[
        J_{\mathrm{RL}}(\pi) = -\E \sum_{t=1}^\infty \gamma^{t-1} r_t,
    \] using exact influence functions and global minimization of the linear approximation, yields the policy iteration algorithm.
\end{proposition}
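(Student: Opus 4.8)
The plan is to match the two steps of PFD to the two steps of policy iteration: the differentiation step (computing the exact influence function) to policy evaluation, and the global-minimization descent step to policy improvement. Throughout, write $G(\pi) = -J_{\mathrm{RL}}(\pi) = \E\sum_{t=1}^\infty\gamma^{t-1}r_t$ for the performance, let $Q^\pi(s,a)$ be the state-action value function, $V^\pi(s) = \E_{a\sim\pi(\cdot|s)}[Q^\pi(s,a)]$ the value function, and $A^\pi = Q^\pi - V^\pi$ the advantage. Because the policy is represented as the joint $\pi(s,a) = \pi(s)\pi(a|s)$ with the state marginal $\pi(s)$ held fixed, every admissible perturbation $\chi = \pi' - \pi$ factors as $\chi(s,a) = \pi(s)(\pi'(a|s) - \pi(a|s))$ and hence has zero conditional mass, $\int_a \chi(s,a)\,da = 0$ for each $s$. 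Consequently the influence function is determined only up to an additive function of $s$ alone.

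First I would compute the influence function $\Psi_{\mathrm{RL}}$, which is the policy-evaluation content. The delicate point is that the stationary policy enters $G$ at every time step, so perturbing it changes the entire discounted state-visitation distribution, not just the immediate action law. I would control this with the performance difference lemma, $G(\pi') - G(\pi) = \frac{1}{1-\gamma}\E_{s\sim d^{\pi'}}\E_{a\sim\pi'(\cdot|s)}[A^\pi(s,a)]$, where $d^{\pi'}$ is the normalized discounted visitation under $\pi'$. Setting $\pi' = \pi + \epsilon\delta$ and expanding to first order, the visitation change contributes only at order $\epsilon^2$: since $\E_{a\sim\pi(\cdot|s)}[A^\pi(s,a)] = 0$, the factor $d^{\pi'} = d^\pi + O(\epsilon)$ multiplies a term that already vanishes at $\epsilon = 0$. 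This leaves
\[ \frac{d}{d\epsilon}G(\pi+\epsilon\delta)\Big|_{0} = \frac{1}{1-\gamma}\int_s d^\pi(s)\int_a \delta(a|s)\,A^\pi(s,a)\,da\,ds. \]
Rewriting $\delta(a|s) = \chi(s,a)/\pi(s)$ and invoking \autoref{thm:influence} identifies
\[ \Psi_{\mathrm{RL}}(s,a) = -\frac{1}{1-\gamma}\,\frac{d^\pi(s)}{\pi(s)}\,A^\pi(s,a), \]
which, modulo the allowed additive function of $s$, equals $-A^\pi(s,a)$ — equivalently $-Q^\pi(s,a)$ — scaled by the nonnegative state-dependent weight $\frac{1}{1-\gamma}d^\pi(s)/\pi(s)$. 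Evaluating this exactly is precisely policy evaluation, the computation of $Q^\pi$.

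Next I would carry out the descent step by globally minimizing the linear approximation $\E_{(s,a)\sim\pi'}[\Psi_{\mathrm{RL}}(s,a)]$ over joint distributions $\pi'$ sharing the fixed marginal $\pi(s)$. Because $\pi'(s,a) = \pi(s)\pi'(a|s)$, the objective decouples across states, and for each $s$ it reduces to $\min_{\pi'(\cdot|s)}\int_a \Psi_{\mathrm{RL}}(s,a)\pi'(a|s)\,da$, i.e.\ $\max_{\pi'(\cdot|s)}\int_a Q^\pi(s,a)\pi'(a|s)\,da$; here the nonnegative weight and the additive function of $s$ are irrelevant, as neither affects the per-state $\argmax$ over $a$. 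The maximizer places all mass on $\argmax_a Q^\pi(s,a)$, exactly the greedy policy — the policy improvement step. This also explains why the state marginal may be chosen arbitrarily: it merely reweights states by a nonnegative factor and never alters the greedy action.

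The main obstacle is the influence-function computation of the second paragraph: it is a functional restatement of the policy gradient theorem, and the crux is rigorously showing that the first-order effect of the policy perturbation on the discounted visitation distribution drops out (owing to $\E_{a\sim\pi}[A^\pi] = 0$), so that only the direct term survives. Once that is established, the identification of $\Psi_{\mathrm{RL}}$ with $-Q^\pi$ up to an immaterial nonnegative reweighting and additive shift, together with the decoupled greedy minimization, is routine.
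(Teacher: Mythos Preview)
Your proof is correct and reaches the same influence function and the same greedy-policy conclusion as the paper, but by a genuinely different route. The paper computes the G\^ateaux derivative of $J_{\mathrm{RL}}$ from scratch: it expands the product $\prod_k \pi(a_k|s_{k-1})$ by the product rule, shows the $t<k$ summands vanish, reindexes, and only then recognizes $Q^\pi$ and $V^\pi$ in the result; it also allows general perturbations $\chi$ (not fixing the state marginal), which is why the $-V^\pi$ term appears explicitly rather than being absorbed into the ``additive function of $s$'' ambiguity you invoke. You instead import the performance difference lemma and use $\E_{a\sim\pi}[A^\pi]=0$ to kill the first-order visitation term, which is slicker and makes the policy-gradient content transparent; the price is that you implicitly assume $\epsilon \mapsto d^{\pi+\epsilon\delta}$ is differentiable (the paper's brute-force expansion avoids needing this as a separate fact), and your restriction to perturbations with fixed state marginal means you only pin down $\Psi_{\mathrm{RL}}$ modulo a function of $s$, which as you correctly note is all that is needed for the descent step. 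The identification of the descent step with greedy policy improvement is handled identically in both arguments.
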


\begin{toappendix}
Proofs continue on the following page.
\onecolumn
\end{toappendix}

\begin{propositionrep} \label{thm:rlderivative}
The influence function for $J_{\mathrm{RL}}$ is
\[
    \Psi_{\mathrm{RL}}(s,a) = -\frac{\sum_{t=0}^\infty \gamma^t p^\pi_t(s)}{\pi(s)} (Q^\pi(s,a) - V^\pi(s)) ,
\] where $Q^\pi$ is the state-action value function, $V^\pi$ is the state value function, and $p^\pi_t$ is the marginal distribution of states after $t$ steps, all under the policy $\pi$.
\end{propositionrep}
\begin{proof}

First, we note that 
\begin{align*}
    &\frac{d}{d\epsilon}(\pi + \epsilon\chi)(a|s)\Big|_{\epsilon=0 } \\
        &\qquad= \frac{d}{d\epsilon} \frac{\pi(a,s) + \epsilon\chi(s,a)}{\pi(s) + \epsilon \chi(s)} \Big|_{\epsilon=0} \\
        &\qquad= \frac{\chi(s,a) - \chi(s)\pi(a|s)}{\pi(s)},
\end{align*} where we abuse notation to denote $\chi(s) = \int \chi(s,a')\,da'$.

We have \[
    -J_{\mathrm{RL}} = \E \Big[\sum_{t=1}^\infty \gamma^{t-1}r_t\Big],
\] or, plugging in the measure,
\[
-J_{\mathrm{RL}} = \int \sum_{t=1}^\infty \gamma^{t-1}r_t  \,p_0(s_0)  \prod_{j=1}^\infty p(s_j, r_j|s_{j-1},a_j)  \prod_{k=1}^\infty \pi(a_k|s_{k-1}) .
\] The integral is over all free variables; we omit them here and in the following derivation for conciseness.

In computing $\frac{d}{d\epsilon}J_{\mathrm{RL}}(\pi + \epsilon\chi)|_{\epsilon=0}$, the product rule dictates that a term appear for every $k$, in which $\pi(a_k|s_{k-1})$ is replaced with $\frac{d}{d\epsilon}(\pi+\epsilon\chi)(a_k|s_{k-1})|_{\epsilon=0}$. Hence:
\begin{align*}
    &-\frac{d}{d\epsilon}J_{\mathrm{RL}}(\pi + \epsilon\chi)\Big|_{\epsilon=0} \\
        &\qquad= \int \sum_{t=1}^\infty \gamma^{t-1}r_t  \,p_0(s_0) \prod_{j=1}^\infty p(s_j, r_j|s_{j-1},a_j)  \\
        &\qquad\qquad \times \sum_{k=1}^\infty \frac{\chi(s_{k-1},a_k) - \chi(s_{k-1}) \pi(a_k|s_{k-1})}{\pi(s_{k-1})} \prod_{\substack{\ell=1 \\ \ell \ne k}}^\infty \pi(a_\ell|s_{\ell-1}) \\
        &\qquad= \sum_{k=1}^\infty\int \sum_{t=1}^\infty \gamma^{t-1}r_t  \,p_0(s_0) \prod_{j=1}^\infty p(s_j, r_j|s_{j-1},a_j) \\
        &\qquad\qquad \times \frac{\chi(s_{k-1},a_k) - \chi(s_{k-1}) \pi(a_k|s_{k-1})}{\pi(s_{k-1})}  \prod_{\substack{\ell=1 \\ \ell \ne k}}^\infty \pi(a_\ell|s_{\ell-1}),
\end{align*} reordering the summations. Note that for $t < k$, the summand vanishes: 
\begin{align*}
    & \int  \prod_{j=k}^\infty p(s_j, r_j|s_{j-1},a_j)  \\
    &\qquad\qquad \times \big( \chi(s_{k-1},a_k) - \chi(s_{k-1}) \pi(a_k|s_{k-1}) \big)  \prod_{\ell=k+1}^\infty \pi(a_\ell|s_{\ell-1}) \\
    &\qquad= \int \big( \chi(s_{k-1},a_k) - \chi(s_{k-1}) \pi(a_k|s_{k-1}) \big) \\
    &\qquad= \int \big( \chi(s_{k-1}) - \chi(s_{k-1}) \big) \\
    &\qquad= 0,
\end{align*} since all the variables $a_k, r_k, s_{k}, a_{k+1}, r_{k+1}, s_{k+1}, \ldots$ integrate away to $1$. This yields:
\begin{align*}
    &-\frac{d}{d\epsilon}J_{\mathrm{RL}}(\pi + \epsilon\chi)\Big|_{\epsilon=0} \\
        &\qquad= \sum_{k=1}^\infty\int \sum_{t=k}^\infty \gamma^{t-1}r_t  \,p_0(s_0) \prod_{j=1}^\infty p(s_j, r_j|s_{j-1},a_j) \\
        &\qquad\qquad \times \frac{\chi(s_{k-1},a_k) - \chi(s_{k-1}) \pi(a_k|s_{k-1})}{\pi(s_{k-1})}  \prod_{\substack{\ell=1 \\ \ell \ne k}}^\infty \pi(a_\ell|s_{\ell-1}).
\end{align*}

Then, substituting the marginal distribution (note $s_{k-1}$ is not integrated) \[
    p_{k-1}^\pi(s_{k-1}) = \int \prod_{j=1}^{k-1} p(s_j, r_j|s_{j-1},a_j) \prod_{\ell=1}^{k-1} \pi(a_\ell|s_{\ell-1}),
\] we obtain
\begin{align*}
    &-\frac{d}{d\epsilon}J_{\mathrm{RL}}(\pi + \epsilon\chi)\Big|_{\epsilon=0} \\
        &\qquad= \sum_{k=1}^\infty\int \sum_{t=k}^\infty \gamma^{t-1}r_t  \,p^\pi_{k-1}(s_{k-1}) \prod_{j=k}^\infty p(s_j, r_j|s_{j-1},a_j) \\
        &\qquad\qquad \times \frac{\chi(s_{k-1},a_k) - \chi(s_{k-1}) \pi(a_k|s_{k-1})}{\pi(s_{k-1})}  \prod_{\ell =k+1}^\infty \pi(a_\ell|s_{\ell-1}).
\end{align*}

Let us rename the integration variables by decreasing their indices by $k-1$:
\begin{align*}
    &-\frac{d}{d\epsilon}J_{\mathrm{RL}}(\pi + \epsilon\chi)\Big|_{\epsilon=0} \\
        &\qquad= \sum_{k=1}^\infty\int \sum_{t=1}^\infty \gamma^{t+k-2}r_{t}  \,p^\pi_{k-1}(s_{0}) \prod_{j=1}^\infty p(s_j, r_j|s_{j-1},a_j) \\
        &\qquad\qquad \times \frac{\chi(s_{0},a_1) - \chi(s_{0}) \pi(a_1|s_{0})}{\pi(s_0)}  \prod_{\ell =2}^\infty \pi(a_\ell|s_{\ell-1}).
\end{align*}

Substituting in
\begin{align*}
    V^\pi(s_0) &= \int \sum_{t=1}^\infty \gamma^{t-1} r_t \prod_{j=1}^\infty p(s_j, r_j|s_{j-1},a_j)  \prod_{\ell =1}^\infty \pi(a_\ell|s_{\ell-1}), \\
    Q^\pi(s_0,a_1) &= \int \sum_{t=1}^\infty \gamma^{t-1} r_t \prod_{j=1}^\infty p(s_j, r_j|s_{j-1},a_j)  \prod_{\ell =2}^\infty \pi(a_\ell|s_{\ell-1}),
\end{align*} we obtain 
\begin{align*}
    &-\frac{d}{d\epsilon}J_{\mathrm{RL}}(\pi + \epsilon\chi)\Big|_{\epsilon=0} \\
        &\qquad= \sum_{k=1}^\infty\int \gamma^{k-1}  \,p^\pi_{k-1}(s_{0})  \frac{Q^\pi(s_0,a_1)\chi(s_{0},a_1) - V^\pi(s_0)\chi(s_{0})}{\pi(s_0)} .
\end{align*}

Finally, by \autoref{thm:influence}, we obtain that \[
    \Psi_{\mathrm{RL}}(s,a) = -\frac{\sum_{k=0}^\infty \gamma^k p^\pi_k(s)}{\pi(s)} (Q^\pi(s,a) - V^\pi(s)).
\]
\end{proof}
The descent step of PFD corresponds to taking a step on $\pi_\theta(s,a)=\pi(s)\pi_\theta(a|s)$ to decrease the linear approximation \[
    \theta \mapsto -\E_{\pi_\theta(s,a)}\Big[ \frac{\sum_{t=0}^\infty \gamma^t p^\pi_t(s)}{\pi(s)} (Q^\pi(s,a) - V^\pi(s)) \Big].
\] Setting $d^\pi(s) = (1-\gamma)\sum_{t=0}^\infty \gamma^t p^\pi_t(s)$, this simplifies to either \begin{align}
\label{eq:rllinapprox}
    \theta &\mapsto -\frac{1}{1-\gamma} \E_{d^\pi(s)}\E_{\pi_\theta(a|s)} [Q^\pi(s,a) - V^\pi(s)], \\
\label{eq:rllinapproxq}
    \theta &\mapsto -\frac{1}{1-\gamma} \E_{d^\pi(s)}\E_{\pi_\theta(a|s)} [Q^\pi(s,a)] + \text{constant.}
\end{align}
The most naive way to decrease \eqref{eq:rllinapproxq} is to globally minimize it. This corresponds to setting $\pi_\theta(a|s)$ to be the greedy policy. Hence, the evaluation of $Q^\pi(s,a)$ in policy iteration corresponds exactly to computing the influence function in the differentiation step of PFD, and the greedy policy update corresponds to applying the descent step.

\subsection{Policy gradient and actor-critic}
Policy iteration exactly computes the linear approximation and nonparametrically minimizes it. Now we consider algorithms in which the policy is parameterized and the descent step is taken using a gradient step on \eqref{eq:rllinapprox} or \eqref{eq:rllinapproxq}. If this approach is taken, there is a lot of flexibility in how the influence function can be approximated, but generally speaking, the result is an actor-critic method \cite{konda2000actor,sutton2000policy}, which describes a class of algorithms that approximates the value function of the current policy and then takes a gradient step on the parameters of the policy using the estimated value function. We claim:

\begin{proposition}
    Approximate PFD on the reinforcement learning objective $J_{\mathrm{RL}}$, where the influence function is estimated using, for example, Monte Carlo, least squares, or temporal differences, yields an actor-critic algorithm.
\end{proposition}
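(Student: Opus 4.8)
The plan is to exhibit the two steps of PFD, the differentiation step and the descent step, as precisely the critic and the actor of an actor-critic method, and to show that the freedom in how the influence function is estimated in the differentiation step is exactly the freedom in how the critic is trained. The key simplification is that \autoref{thm:rlderivative} has already identified $\Psi_{\mathrm{RL}}$ as a state-reweighted advantage $Q^\pi(s,a) - V^\pi(s)$, so no further functional-derivative computation is needed; what remains is to interpret each PFD step operationally.

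First I would analyze the descent step. Parameterizing the conditional policy as $\pi_\theta(a|s)$ while holding the base state marginal $\pi(s)$ fixed, so that $\pi_\theta(s,a) = \pi(s)\pi_\theta(a|s)$, \autoref{thm:chain} gives $\nabla_\theta J_{\mathrm{RL}}(\pi_\theta) = \nabla_\theta \E_{\pi_\theta(s,a)}[\hat\Psi(s,a)]$ with $\hat\Psi \approx \Psi_{\mathrm{RL}}$ treated as independent of $\theta$. Substituting the form of $\Psi_{\mathrm{RL}}$, the $\pi(s)$ in its denominator cancels against the $\pi(s)$ in the joint, leaving $\sum_t \gamma^t p_t^\pi(s)$; collecting this into the discounted visitation measure $d^\pi(s) = (1-\gamma)\sum_t \gamma^t p_t^\pi(s)$ reduces the objective to exactly \eqref{eq:rllinapprox}. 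Applying the log-derivative identity from the discussion following \autoref{thm:chain} then produces the policy-gradient form $\E_{d^\pi(s)}\E_{\pi_\theta(a|s)}[(Q^\pi(s,a) - V^\pi(s))\,\nabla_\theta \log \pi_\theta(a|s)]$, in which the $V^\pi$ term integrates to zero and hence acts as a variance-reduction baseline; this is exactly the actor update.

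Next I would treat the differentiation step, which must supply $\hat\Psi$, that is, an estimate of $Q^\pi$ (with $V^\pi$ an optional baseline) for the current policy. Here I would enumerate the three cases: a Monte Carlo estimate replaces $Q^\pi(s,a)$ by a sampled discounted return, recovering REINFORCE-with-baseline; a least-squares critic fits a parametric $Q_\phi$ (or $V_\phi$) by regression onto such returns; and a temporal-difference critic fits $Q_\phi$ by minimizing a bootstrapped Bellman residual. Each yields a legitimate approximate influence function, and feeding it into the descent step above produces the corresponding actor-critic variant. The prefactor $\sum_t \gamma^t p_t^\pi(s)/\pi(s)$ never needs to be estimated explicitly, since it is absorbed into sampling states from $d^\pi$ in the descent step.

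The main obstacle is conceptual rather than computational. \emph{Actor-critic} names a family of algorithms rather than a single one, so the statement is really a correspondence claim, and care is needed to pin down which quantity is approximated where. The delicate point is verifying that the cancellation of $\pi(s)$ produces precisely the discounted visitation measure $d^\pi$ of the policy-gradient theorem and that the score-function form of the descent gradient matches the standard actor update; once these identifications are secured, slotting each of the Monte Carlo, least-squares, and temporal-difference critics into the differentiation step is routine.
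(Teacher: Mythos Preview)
Your proposal is correct and matches the paper's own justification almost exactly: the paper treats this proposition informally via the discussion surrounding it, observing that the descent step on \eqref{eq:rllinapprox} or \eqref{eq:rllinapproxq} is the actor update (justified by \autoref{thm:chain} and \autoref{thm:rlderivative}, which together yield the policy gradient theorem), and that the differentiation step---estimating $Q^\pi$ or the advantage by Monte Carlo, least squares, or temporal differences---is the critic. Your write-up is in fact more explicit than the paper's, spelling out the $\pi(s)$ cancellation, the emergence of $d^\pi$, and the score-function form of the actor gradient, all of which the paper leaves implicit in its reference to the policy gradient theorem.
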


There is a huge number of possible approximations to the influence function; we list several and their corresponding algorithms. The simplest algorithm is the policy gradient algorithm, also known as REINFORCE \cite{williams1992simple}, which directly uses a Monte Carlo estimate of $Q^\pi(s,a)$ as the influence function estimator. Stochastic value gradients \cite{heess2015learning} and the closely related deterministic policy gradient \cite{silver2014deterministic} fit a neural network to $Q^\pi(s,a)$ using a temporal difference update and use that as the influence function approximation; their use of a neural network makes them compatible with the reparameterization trick. Advantage actor-critic \cite{mnih2016asynchronous} estimates $Q^\pi(s,a) - V^\pi(s)$ by estimating $Q^\pi(s,a)$ using Monte Carlo and fitting a neural network to $V^\pi(s)$ using least squares. All of these algorithms are traditionally justified by the celebrated policy gradient theorem \cite{sutton2000policy}; we remark that this theorem is a corollary of \autoref{thm:chain} and \autoref{thm:rlderivative}.

\subsection{Dual actor-critic}
Because $J_{\mathrm{RL}}$ is not convex, adversarial PFD does not directly apply. However, the form of \autoref{thm:rlderivative} strongly suggests fixing the arbitrary distribution $\pi(s)$ to be the discounted marginal distribution of states $d^\pi(s)$. Closely related to the linear programming formulation of reinforcement learning \cite{puterman1994markov}, this choice turns out to convexify $J_{\mathrm{RL}}$, thus enabling the use of convex duality to approximate its influence function. We expect to obtain an adversarial formulation of reinforcement learning; one such formulation is the dual actor-critic algorithm \cite{dai2017boosting,chen2016stochastic}:
\begin{equation}
    \sup_{\pi} \inf_V\ (1-\gamma) \E_{p_0(s)}[V(s)] \\
    + \E_{\pi(s,a)} [\mathcal{A}V(s,a)],  \label{eq:dualactorcritic}
\end{equation} where $\mathcal{A} V (s,a) = \E_{p(s',r|s,a)}[r + \gamma V(s')] - V(s)$. Indeed:

\begin{proposition}
Adversarial PFD on the reinforcement learning objective $J_{\mathrm{RL}}$ yields the dual actor-critic algorithm \eqref{eq:dualactorcritic}.
\end{proposition}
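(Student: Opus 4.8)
The plan is to exhibit $J_{\mathrm{RL}}$, under the reparametrization flagged just before the statement, as a \emph{linear} (hence convex) functional of an occupancy measure, compute its convex conjugate by Lagrangian duality, and then read off the saddle point directly from \autoref{thm:fenchel}. First I would change variables from the policy $\pi(a|s)$ to the normalized discounted occupancy measure
\[
    \mu(s,a) = d^\pi(s)\,\pi(a|s) = (1-\gamma)\sum_{t=0}^\infty \gamma^t p^\pi_t(s)\,\pi(a|s),
\]
which is exactly the choice $\pi(s)=d^\pi(s)$ described in the text. Summing this definition shows $\mu \in \mathcal{P}(X)$, and a direct computation (using $\E[\sum_t \gamma^{t-1}r_t] = \frac{1}{1-\gamma}\E_\mu[\bar r]$, where $\bar r(s,a) = \E_{p(s',r|s,a)}[r]$) rewrites the objective as the linear functional $J_{\mathrm{RL}}(\mu) = -\tfrac{1}{1-\gamma}\E_\mu[\bar r(s,a)]$. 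This linearity in $\mu$ is precisely the promised convexification: as a functional of the occupancy measure, rather than of the policy, the objective is convex, so \autoref{thm:fenchel} becomes applicable.

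Next I would make the feasible set explicit via the linear-programming formulation of MDPs \cite{puterman1994markov}: a nonnegative measure arises as the occupancy measure of some policy if and only if it satisfies the Bellman flow constraint
\[
    \int \mu(s,a)\,da = (1-\gamma)\,p_0(s) + \gamma\int p(s|s',a')\,\mu(s',a')\,ds'\,da',
\]
a system of affine equalities cutting out a convex set $\mathcal{D}\subseteq\mathcal{P}(X)$. I would then extend the (rescaled) objective to $\mathcal{M}(X)$ by setting $(1-\gamma)J_{\mathrm{RL}}(\mu) = -\E_\mu[\bar r]$ on $\mathcal{D}$ and $+\infty$ off it, so that it is proper, convex, and lower semicontinuous; rescaling by the positive constant $1-\gamma$ leaves the PFD algorithm unchanged and keeps the reward coefficient equal to one, matching \eqref{eq:dualactorcritic}.

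The core computation is the conjugate. I would introduce a multiplier $V\in\mathcal{C}(X)$ (the value function) for the flow equalities, writing the indicator of $\mathcal{D}$ as $\sup_V\langle V, g(\mu)\rangle$ with $g(\mu)$ the flow residual. Collecting the $\bar r$ and $V$ terms, and using the identity $\E_\mu[\gamma\E_{p}V(s') - V(s)] = -(1-\gamma)\E_{p_0}[V]$ that holds on $\mathcal{D}$, yields the representation
\[
    (1-\gamma)J_{\mathrm{RL}}(\mu) = \sup_{V\in\mathcal{C}(X)}\Big[ \E_\mu[-\mathcal{A}V(s,a)] - (1-\gamma)\,\E_{p_0(s)}[V(s)] \Big],
\]
where $\mathcal{A}V(s,a) = \E_{p(s',r|s,a)}[r+\gamma V(s')] - V(s)$. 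Reading this against \autoref{thm:representation} with $\varphi = -\mathcal{A}V$ (the map $V\mapsto -\mathcal{A}V$ is injective by a $\gamma$-contraction argument, so the penalty is well defined) identifies the conjugate $J_{\mathrm{RL}}^\star(-\mathcal{A}V) = (1-\gamma)\E_{p_0}[V]$. Substituting into the adversarial PFD saddle point $\inf_\mu\sup_\varphi[\E_\mu[\varphi] - J_{\mathrm{RL}}^\star(\varphi)]$ of \autoref{thm:fenchel} and restoring the policy parametrization $\mu=d^\pi(s,a)$ (so the outer infimum ranges over occupancy measures, equivalently policies) gives
\[
    \inf_\pi\sup_V\big[ \E_{\pi(s,a)}[-\mathcal{A}V] - (1-\gamma)\E_{p_0(s)}[V(s)] \big] = -\sup_\pi\inf_V\big[ (1-\gamma)\E_{p_0(s)}[V(s)] + \E_{\pi(s,a)}[\mathcal{A}V(s,a)] \big],
\]
which is exactly \eqref{eq:dualactorcritic}; the outer sign flip simply reflects that minimizing $J_{\mathrm{RL}}$ is maximizing expected reward, and the inner supremum over $\varphi = -\mathcal{A}V$ recovers the critic.

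The hard part will be the convexification step, that is, rigorously justifying that $\mathcal{D}$ is \emph{exactly} the set of achievable occupancy measures and that restricting PFD's descent variable to $\mathcal{D}$ is legitimate; this is the nontrivial bijection between policies and occupancy measures underlying the LP formulation, and it is what converts a functional that is nonconvex in $\pi$ into one that is linear in $\mu$. A secondary technical obstacle is checking the hypotheses of \autoref{thm:representation} and \autoref{thm:fenchel} in the general continuous state--action setting—properness, convexity, and weak lower semicontinuity of the value-function penalty, and existence of a maximizing $V$ so that the representation yields a genuine influence function rather than merely the variational formula.
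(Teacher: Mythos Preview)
Your proposal is correct and follows essentially the same route as the paper: reparametrize by the discounted state--action occupancy measure so that $J_{\mathrm{RL}}$ becomes a linear functional subject to the Bellman flow constraint, dualize the constraint via a Lagrange multiplier $V$, identify $\varphi=-\mathcal{A}V$ (with uniqueness of $V_\varphi$ by a $\gamma$-contraction argument), apply \autoref{thm:representation} to read off $J_{\mathrm{RL}}^\star$, and substitute into the saddle point of \autoref{thm:fenchel}. The paper's proof is organized identically, with the only cosmetic difference that it silently absorbs the $(1-\gamma)$ factor you track explicitly.
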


\begin{propositionrep}
The convex conjugate of $J_{\mathrm{RL}}$ is \[
    J_{\mathrm{RL}}^\star(\varphi) = (1-\gamma)\E_{p_0(s)}V_\varphi(s) + \{ V_\varphi \text{ exists} \},
\] where $V_\varphi$ is the unique solution to $\varphi = -\mathcal{A} V_\varphi$, if it exists.
\end{propositionrep}
\begin{proof}
As mentioned in the text, we set the arbitrary distribution $\pi(s) =  (1-\gamma)\sum_{t=0}^\infty \gamma^t p^\pi_t(s)$. In doing so, $\pi(s,a)$ becomes a state-action \emph{occupancy measure} that describes the frequency of encounters of the state-action pair $(s,a)$ over trajectories governed by the policy $\pi(a|s)$. It is known that there is a bijection between occupancy measures $\pi(s,a)$ and policies $\pi(a|s)$ \cite{syed2008apprenticeship,ho2016generative}.

We can enforce this setting by redefining \[
    J_{\mathrm{RL}}(\pi) = -\E \sum_{t=1}^\infty \gamma^{t-1} r_t + \Big\{ \forall s: \pi(s) = (1-\gamma)\sum_{t=0}^\infty \gamma^t p^\pi_t(s) \Big\},
\] where again $\{ \cdot \}$ is the convex indicator function. This equation can be rewritten as \[
    J_{\mathrm{RL}}(\pi) = -\E_{\pi(s,a)} R(s,a) + \Big\{ \forall s':\ \pi(s') = (1-\gamma)p_0(s') + \gamma \E_{\pi(s,a)} p(s'|s,a)\Big\},
\]
where $R(s,a) = \E_{p(s',r|s,a)}[r]$. The constraint is known as the \emph{Bellman flow equation}. This formulation is convex, as it is the sum of an affine function and an indicator of a convex set (indeed, an affine subspace). 

We recall $-\varphi = \mathcal{A}V_\varphi$, where $\mathcal{A} V (s,a) = \E_{p(s',r|s,a)}[r + \gamma V(s')] - V(s)$. Now, $V_\varphi$ is uniquely defined by $\varphi$ if a solution to the equation exists. To see this, note that $V_\varphi$ is the fixed point of the Bellman operator $\mathcal{T}^a$ defined by \[
    (\mathcal{T}^a V)(s) = (R + \varphi)(s,a) + \gamma \E_{p(s'|s,a)} V(s'),
\] which is contractive and therefore has a unique fixed point. A representation of $V_\varphi$ may be obtained via fixed point iteration using $\mathcal{T}^a$ for an arbitrary action $a$: \[
    V_\varphi(s) = \lim_{k\to\infty }(\mathcal{T}^a)^k 0 =  \E^a \sum_{t=1}^\infty \gamma^{t-1} (R + \varphi)(s_t,a),
\] where the expectation is taken under the deterministic policy $a$.

We rewrite $J_{\mathrm{RL}}$ using a Lagrange multiplier $V(s)$
\begin{align*}
    J_{\mathrm{RL}}(\pi) 
        &= -\E_{\pi(s,a)}R(s,a) + \sup_{V} \int V(s') \Big[ \pi(s') - (1-\gamma)p_0(s') - \gamma \E_{\pi(s,a)} p(s'|s,a)\Big]\,ds' \\
        &= \sup_{V}  -\E_{\pi(s,a)}R(s,a) + \E_{\pi(s)} V(s) - (1-\gamma)\E_{p_0(s)} V(s) - \gamma \E_{\pi(s,a)} \E_{p(s'|s,a)} V(s') \\
        &= \sup_{\varphi} \E_{\pi(s,a)} \varphi(s,a) - (1-\gamma) \E_{p_0(s)} V_\varphi(s) - \{ V_\varphi \text{ exists} \}.
\end{align*}

Note that $(1-\gamma) \E_{p_0(s)} V_\varphi(s) + \{ V_\varphi \text{ exists} \}$ is convex in $\varphi$; this stems from the fact that \[
      V_{\alpha \varphi + (1-\alpha)\varphi'} = \alpha V_\varphi + (1-\alpha) V_{\varphi'}.
\] The result follows from \autoref{thm:representation}.

\end{proof}

Using \autoref{thm:fenchel}, adversarial PFD therefore recovers \eqref{eq:dualactorcritic}: 
\begin{align*}
    &\inf_\pi \sup_\varphi \E_{\pi(s,a)}[\varphi(s,a)] - J^\star_{\mathrm{RL}}(\varphi) \\
        &\qquad= \inf_\pi \sup_\varphi \E_{\pi(s,a)}[-\mathcal{A}V_\varphi(s,a)] - (1-\gamma) \E_{p_0(s)} V_\varphi(s).
\end{align*}

\section{Conclusion}
This paper suggests several new research directions. First is the transfer of insight and specialized techniques from one domain to another. As just one example, in the context of GANs, \citet{arjovsky2017wasserstein} claim that constraining the discriminator to be $1$-Lipschitz improves the stability of the training algorithm -- could similarly constraining the analogous object in reinforcement learning, namely an approximation to the advantage function, lead to improved stability in deep reinforcement learning?

Moreover, the abstract viewpoint taken in this paper allows for the simultaneous development of new algorithms for GANs, variational inference, and reinforcement learning. General influence function approximation techniques in the spirit of convex duality could improve all three fields at once. More sophisticated descent techniques beyond gradient descent on parameterized probability distributions, such as Frank-Wolfe or trust-region methods, could improve learning or yield valuable convergence guarantees.

Finally, this paper unlocks the possibility of applying probability functional descent to new problems. In principle, the algorithm can be applied mechanically to any situation where one wants to optimize over probability distributions, possibly leading to new, straightforward ways to solve problems in, for example, mathematical finance, mean field games, or POMDPs. One could argue that the current excitement over deep learning began once researchers realized that to solve a problem, they could simply write a loss function and then rely on automatic differentiation and gradient descent to minimize it. We hope that probability functional descent provides a similarly turnkey solution for optimizing loss functions defined on probability distributions and leads to a similar burst of research activity. 

\section*{Acknowledgements}
We thank Rui Shu, Yang Song, Shengjia Zhao, Abubakar Abid, Andrea Zanette, Jordi Feliu Fab\`a, Jing An, Abeynaya Gnanasekaran, and Kailai Xu for helpful discussions. Support from NSF grants DMS-1720451 and DMS-1820942 is gratefully acknowledged by J.~Blanchet.


\bibliography{example_paper}
\bibliographystyle{icml2019}

\clearpage
\appendix
\twocolumn[
\thispagestyle{empty}{\center\baselineskip 18pt
                       \toptitlebar{\Large\bf Supplementary Material for Probability Functional Descent}\bottomtitlebar}
]

\section{Proofs and Computations}

\end{document}